\DeclareSymbolFont{matha}{OML}{txmi}{m}{it}
\DeclareMathSymbol{\varv}{\mathord}{matha}{35}
\theoremstyle{definition}
\newtheorem{theorem}{Theorem}
\newtheorem{corollary}{Corollary}
\newtheorem{proposition}{Proposition}
\newtheorem{lemma}{Lemma}
\newtheorem{definition}{Definition}
\newtheorem{example}{Example}
\theoremstyle{remark}
\newtheorem{rmk}{Remark}
\DeclareRobustCommand{\norm}[1]{\left\lVert #1 \right\rVert} 
\DeclareRobustCommand{\bb}[1]{\mathbb{#1}} 
\DeclareRobustCommand{\c}[1]{\mathcal{#1}}
\DeclareRobustCommand{\t}[1]{\text{#1}}
\DeclareRobustCommand{\Set}[2][]{\left\{#1 \, \middle|\, #2 \right\}}
\DeclareRobustCommand{\bk}[2]{\left\langle #1,\,#2\right\rangle}
\DeclareRobustCommand{\f}[2]{\frac{#1}{#2}}
\DeclareRobustCommand{\ep}{\varepsilon}
\DeclareRobustCommand{\eps}{\epsilon}
\DeclareRobustCommand{\d}{\delta}
\DeclareRobustCommand{\a}{\alpha}
\DeclareRobustCommand{\vp}{\varphi}
\DeclareRobustCommand{\mr}[1]{\mathrm{#1}}
\DeclareRobustCommand{\th}{\t{th}}
\newcommand\restr[2]{{
  \left.\kern-\nulldelimiterspace 
  #1 
  \vphantom{\rule{0pt}{9pt}} 
  \right|_{#2} 
  }}
\DeclareMathOperator{\conv}{conv}
\DeclareMathOperator{\vol}{vol}
\DeclareMathOperator{\vspan}{span}
\DeclareMathOperator{\tr}{tr}
\title{Frank-Wolfe Meets Metric Entropy}
\begin{document}
\begin{abstract}%
    The Frank-Wolfe algorithm has seen a resurgence in popularity due to its ability to efficiently solve constrained optimization problems in machine learning and high-dimensional statistics. 
    As such, there is much interest in establishing when the algorithm may possess a ``linear'' $O(\log(1/\eps))$ dimension-free iteration complexity comparable to projected gradient descent.

    In this paper, we provide a general technique for establishing domain specific and easy-to-estimate lower bounds for Frank-Wolfe and its variants using the metric entropy of the domain. Most notably, we show that a dimension-free linear upper bound must fail not only in the worst case, but in the \emph{average case}: for a Gaussian or spherical random polytope in $\bb{R}^d$ with $\mr{poly}(d)$ vertices, Frank-Wolfe requires up to $\tilde\Omega(d)$ iterations to achieve a $O(1/d)$ error bound, with high probability. We also establish this phenomenon for the nuclear norm ball. 

    The link with metric entropy also has interesting positive implications for conditional gradient algorithms in statistics, such as gradient boosting and matching pursuit. In particular, we show that it is possible to extract fast-decaying upper bounds on the excess risk directly from an analysis of the underlying optimization procedure.
\end{abstract}

\author{Suhas Vijaykumar}
\address{MIT Dept. of Economics and Center for Statistics}
\email{suhasv@mit.edu}

\maketitle

\setcounter{tocdepth}{1}
\tableofcontents

\clearpage

\section{Introduction \& Related Work}
Constrained, high-dimensional convex optimization problems are ubiquitous in modern statistics and machine learning, and many promising approaches have emerged to solve them. In particular, for well-behaved objectives, the projected gradient descent algorithm of \citet{beck_fast_2009} has been shown to need only $O(\log(1/\eps))$ iterations to find an $\eps$-aproximate solution \citep{bubeck2015convex}. Unfortunately, it can require a costly ``projection step'' at each iteration, which is particularly expensive in high dimensional problems.

In contrast, conditional gradient algorithms such as Frank-Wolfe only require the solution of a linear program at each iteration, which is a major improvement when the ambient dimension is large \citep{frank_algorithm_1956,jaggi_revisiting_2013}. However, even for well-behaved objectives, the worst case dimension-free iteration complexity scales as $\Omega(1/\eps)$. This has led to a search for algorithmic variants and high-level conditions that produce a ``linear'' $O(\log(1/\eps))$ iteration complexity, where the polyhedral geometry of the domain has played a major role.\footnote{See e.g.~\citet{garber_faster_2015,garber_linearly_2016,lacoste-julien_global_2015,beck_linearly_2017,pena_polytope_2019}.} 

A noteworthy aspect of conditional gradient algorithms is that they produce solutions which are \emph{sparse}: each iteration $t$ of the procedure yields a convex combination of at most $t$ atoms. 
This is true of all related algorithms known to the authors. 
Sparsity has made conditional gradient algorithms particularly useful in problems such as statistical estimation, signal recovery, and the construction of coresets \citep{li_mixture_2000,zhang_sequential_2003,zhang_boosting_2005,clarkson_coresets_2010,tropp_greed_2004}.

It is therefore reasonable to expect that sparsity might place additional restrictions on the convergence rate, and this is the direction we pursue here.\footnote{It may be desirable to replace sparsity with the number of calls to a linear maximization oracle; we unfortunately do not know how to do this. However, since each call to a linear oracle will output a single atom, it would require some creativity to construct a solution using $\omega(t)$ atoms from $t$ oracle queries.}
In fact, all existing lower bounds for Frank-Wolfe make essential use of sparsity \citep{jaggi_revisiting_2013,garber_linearly_2016,mirrokni_tight_2017,Lan2020}. 
However, instead of constructing a single, hard instance, as in prior work,\footnote{All papers known to the authors construct a hard instance using the simplex. \citet{mirrokni_tight_2017} also derive a sharper lower bound from a random polytope with vertices drawn from the hypercube.} here we derive \emph{domain specific} lower bounds that can be adapted to a wide variety of settings using geometric data. 

At the core of the paper is a basic and well-known geometric idea: if every point in the domain $D$ can be approximated by a sparse set of atoms, then the metric entropy of $D$ cannot be too large. Running the argument in reverse, we see that if the metric entropy of $D$ is large then it takes many atoms\textemdash hence many conditional gradient iterations\textemdash to produce a good approximate solution. 

This approach is simultaneously flexible enough to handle a wide variety of settings\textemdash and even to conduct an average case analysis\textemdash yet it is powerful enough to produce tight results: in particular, to show that in many real-world settings the best dimension-free guarantee possible is $\tilde O(1/\eps)$. We demonstrate this to be true for the $\ell^1(d)$-ball, for the nuclear norm ball, and for a spherical or Gaussian random polytope with a polynomial-in-$d$ number of vertices (with high probability).
Moreover, these results each follow from a single, generic argument using basic information: the volume of a polytope and its number of vertices. 
\begin{table}
  \centering
  \begin{tabular}{|c|c|c|c|c|}
    \hline
 Domain &  Dimension & \# Vertices & Lower Bound & Upper Bound \\ \hline \hline
 Random polytope & $d$ & $\mr{poly}(d)$ & $d\log(1/\eps)$*& ? \\ \hline \hline
 Probability simplex $\Delta(d)$ & $d-1$ & $d$ & $d\log(1/\eps)$& $d\log(1/\eps)$ \\ \hline
 $\ell^1(d)$ Ball & $d$ & $2d$ & $d\log(1/\eps)$ & $d\log(1/\eps)$  \\ \hline
 Nuclear-norm Ball & $m \times n$ &$ \infty$ & $\frac{mn\log(1/\eps)}{(m+n)}$* & $1/\eps$ \\ \hline \hline
 $\ell^\infty(d)$ Ball & $d$ & $2^d$ & $\log(d/\eps)$*  & $d^2\log(1/\eps)$ \\ \hline
 Strongly convex set & $d$ & $\infty$ & $1$ & $1/\sqrt{\eps}$ \\ \hline
  \end{tabular}
\vspace{1em}
  \caption{\label{tab:overview} Lower bounds (up to universal constants and logarithmic factors) for algorithms whose $t^\th$ iterate is a convex combination of at most $t$ vertices. The starred lower bounds are contributions of this paper; the first three lower bounds should be understood to hold only for $\eps < 1/d$, and the fourth for $\eps < 1/(m \wedge n)$.}
\end{table}
\subsection{Discussion of Results}
 Table \ref{tab:overview} summarizes our results on lower bounds for Frank-Wolfe in various settings. In the first four examples, the best possible dimension-free rate is $\tilde O(1/\eps)$, and the upper and lower bounds match in the regime $\eps = \Theta(1/d)$ (or $\eps = \Theta(1/(m \wedge n))$ in the matrix case). In the fifth example ($\ell^\infty(d)$-ball), the same is true, but the bounds match in the regime $\eps = \Theta(1/\log(d))$, which is only relevant for extreme values of $d$. The upper bounds are quoted from the works of \citet{jaggi_revisiting_2013}, \citet{garber_faster_2015}, \citet{garber_linearly_2016}, and \citet{lacoste-julien_global_2015}.


Our results also highlight necessary conditions for accelerated dimension-free rates: the set of extreme points must be roughly ``as complex'' as the entire domain. This holds in particular for strictly convex sets, underscoring the open problem posed by \citet{garber_faster_2015} of establishing or refuting linear convergence in strongly convex domains. 

Finally, we show that improved dimension-free guarantees for Frank-Wolfe (or its variants) can be used to derive fast rates in dictionary aggregation problems. Most notably, we show that in the convex aggregation problem the distance of the risk minimizer to the boundary plays a role analogous to the ``margin condition'' in classification, allowing one to establish a fast rate of convergence in excess risk \citep{tsybakov2003optimal,lecue2007optimal}. Our argument is similar to \citet{zhang_boosting_2005} in that it relies essentially on early stopping, but the outcome is quite novel.
\section{Preliminaries} 
We now introduce the central objects studied in the paper.
\begin{definition} Given a subset $T$ of a Hilbert space $H$, the \emph{$k$-convex hull} is given by 
  \[\conv_k(T) = \Set[\sum_{i=1}^k \lambda_it_i]{t_1,\ldots,t_k \in T, \lambda \in \Delta(k)},\] 
  where $\Delta(k)$ denotes the probability simplex in $\bb{R}^k$. The regular (closed) convex hull is denoted $\conv(T)$.
\end{definition}
This motivates our definition of a \emph{sparse programming algorithm}, namely an optimization algorithm over $\conv(T)$ whose outputs take values in $\conv_k(T)$.
\begin{definition}
  A \emph{$k$-sparse programming algorithm} is a function $A(T,f)$ which, given access to a cost function $f$ and a set $T$,
  outputs some $z \in \conv_k(T)$. The set of all such functions is denoted $\bb{A}(k)$
\end{definition}
We do not restrict the computational complexity of the candidate function in any way, nor the manner in which it depends on $T$ and $f$, as that is irrelevant for our results.
By definition, the $k^\th$ iterate of Frank-Wolfe and all of its usual variants comprises a $k$-sparse programming algorithm\textemdash including the ``away steps'' and ``fully corrective'' variants, as well as the local oracle construction of \citet{garber_linearly_2016}. 
Since we ignore all properties other than sparsity, the central quantity we study is the following.
\begin{definition}
  For a given error tolerance $\eps > 0$, the \emph{sparse programming complexity} $S_F(T,\eps)$ of a set $T$, relative to a class $F$ of cost functions $H \to \bb{R}$, is the smallest integer $k$ such that
  \[ \inf_{A \in \bb{A}(k)}\sup_{f \in F \atop x \in \conv(T)} f(A(T,f)) - f(x) \le \eps.\]
\end{definition}
 We always consider the distance-squared cost functions \[F(H) = \Set[f_x: y \mapsto \norm{x-y}^2]{x \in H},\] and suppress the subscript $F$. Note that this class of cost functions is somewhat ideal in that all its elements are $2$-smooth and $2$-strongly convex. This serves to isolate the dependence of our lower bounds on the geometry of the domain alone.

The starting point of our work is the relation between the sparse programming complexity and the classical \emph{approximate Carath\'eodory} problem in convex geometry, which aims to bound the following quantity. 
\begin{definition}
  The \emph{compressibility} of a set $T \subset H$ is given by 
  \begin{equation}\label{def:compress}
    c(T,\eps)=\inf\Set[k \in \bb N]{\conv(T) \subset \conv_k(T) + \epsilon B(H)},
  \end{equation}
  where $B(H)$ denotes the unit ball in $H$.
\end{definition}
In other words, these numbers quantify how many points of $T$ are necessary to $\eps$-approximate the convex hull. 
Since finding $y$ such that $x \in y + \eps B(H)$ is equivalent to finding an $\eps^2$-approximate minimizer of the cost function $z \mapsto \norm{x-z}^2$, which belongs to $F_H$, we deduce the following.
\begin{lemma} For any $T \subset H$, the compressibility satisfies
  \begin{equation}\label{eq:comp-sparse}c(T, \eps) \le S(T, \eps^2).\end{equation}
\end{lemma}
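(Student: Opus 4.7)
The plan is to unpack the two definitions and observe that a sparse programming algorithm which performs well against the distance-squared family of cost functions automatically produces a sparse approximation of every point in $\conv(T)$, by running it on the cost function whose minimizer is the point we want to approximate.

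More concretely, I would first set $k = S(T, \eps^2)$ and invoke the definition of the sparse programming complexity to obtain an algorithm $A \in \bb{A}(k)$ such that
\[ \sup_{f \in F,\, x \in \conv(T)} f(A(T,f)) - f(x) \le \eps^2. \]
Then for each target point $x_0 \in \conv(T)$, I would specialize the cost function to $f_{x_0}(y) = \norm{x_0 - y}^2 \in F(H)$. Because $f_{x_0}(x_0) = 0$ and $f_{x_0}$ is nonnegative, the suboptimality bound applied with $x = x_0$ yields
\[ \norm{x_0 - A(T, f_{x_0})}^2 = f_{x_0}(A(T, f_{x_0})) - f_{x_0}(x_0) \le \eps^2. \]
Since by definition $A(T, f_{x_0}) \in \conv_k(T)$, we conclude that $x_0 \in \conv_k(T) + \eps B(H)$. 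Because $x_0$ was arbitrary, $\conv(T) \subset \conv_k(T) + \eps B(H)$, and therefore $c(T, \eps) \le k = S(T, \eps^2)$.

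There is essentially no obstacle: the main content is recognizing that the distance-squared cost functions are in $F(H)$ and that the suboptimality gap in $f_{x_0}$ is exactly the squared Hilbert distance to $x_0$, so the square root converts an $\eps^2$ objective gap to an $\eps$ covering radius. The only mild subtlety worth noting explicitly is that the sup in the definition of $S(T,\eps^2)$ is taken over all $x \in \conv(T)$, which is what lets us plug in $x = x_0$ and identify the optimum of $f_{x_0}$ over $\conv(T)$ as $0$.
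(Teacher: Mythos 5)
Your proof is correct and follows essentially the same route as the paper, which disposes of the lemma in one sentence by observing that an $\eps^2$-approximate minimizer of $z \mapsto \norm{x-z}^2$ over $\conv(T)$ is a point of $\conv_k(T)$ within distance $\eps$ of $x$. Your write-up simply makes that observation explicit, so there is nothing to add.
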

This relation was noted by \cite{combettes_revisiting_2021}, who showed that modern convergence guarantees for Frank-Wolfe can be used to improve estimates on compressibility. In other words, a convergence guarantee for any sparse programming algorithm provides a \emph{deterministic proof} of low compressibility.

In this work, we pursue a similar line of reasoning using a classical idea often attributed to Maurey that relates
the compressibility to the metric entropy \citep{SAF_1980-1981____A5_0}. 
The metric entropy may be defined as follows.
\begin{definition} The \emph{covering number} $N(\eps,T,d)$ of a set $T \subset H$ with respect to a metric $d$ is defined as the smallest cardinality of a set $U$ such that
  \[\sup_{x \in T} \inf_{y \in U} d(x,y) \le \eps.\] The \emph{metric entropy} is the natural logarithm of the covering number, $\log N(\eps,T,d)$.
\end{definition}
Unless otherwise indicated, we take $d$ to be the norm in $H$ and omit this argument. It is straightforward to show that the metric entropy of $\conv_k(T)$ satisfies the following bound, which forms the basis for our results.
\begin{lemma}\label{lem:khull-informal} Let $T$ be a subset of the unit ball in $H$. Then
  \begin{equation} \label{eq:khull-informal}
  \log N(\eps, \conv_k(T)) \le k(\log N(\eps/2, T) + \log(6/\eps)).
\end{equation}
\end{lemma}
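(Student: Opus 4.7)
The plan is to produce an $\eps$-net for $\conv_k(T)$ by independently discretizing the \emph{atoms} (elements of $T$) and the \emph{coefficients} (elements of $\Delta(k)$), and then bounding the product of the two resulting net sizes. Specifically, write any $x \in \conv_k(T)$ as $x = \sum_{i=1}^k \lambda_i t_i$ with $\lambda \in \Delta(k)$ and $t_i \in T$, and approximate each of the two factors separately.

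For the atoms, fix a minimum $(\eps/2)$-net $U \subset T$ with $|U| = N(\eps/2,T)$, and for each $t_i$ let $u_i \in U$ be a closest point. Then $x' := \sum_i \lambda_i u_i$ satisfies
\[
  \norm{x - x'} \;\le\; \sum_{i=1}^k \lambda_i \norm{t_i - u_i} \;\le\; \eps/2,
\]
so $\conv_k(T) \subset \conv_k(U) + (\eps/2)\,B(H)$. For the coefficients, fix an $(\eps/2)$-net $V \subset \Delta(k)$ in the $\ell^1$ metric, and pick $\mu \in V$ with $\norm{\lambda - \mu}_1 \le \eps/2$. Since each $u_i$ lies in the unit ball,
\[
  \norm{x' - \textstyle\sum_i \mu_i u_i} \;\le\; \sum_{i=1}^k |\lambda_i - \mu_i|\,\norm{u_i} \;\le\; \norm{\lambda - \mu}_1 \;\le\; \eps/2.
\]
Combining the two displays by the triangle inequality shows that the finite set $\{\sum_i \mu_i u_i : \mu \in V,\ u \in U^k\}$ is an $\eps$-net for $\conv_k(T)$, so
\[
  N(\eps, \conv_k(T)) \;\le\; |U|^k \cdot |V| \;=\; N(\eps/2,T)^k \cdot |V|.
\]

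What remains is to bound $|V|$, i.e.\ the $\ell^1$ covering number of the simplex, by $(6/\eps)^k$. Since $\Delta(k) \subset B_1(k)$, the standard volumetric estimate for an arbitrary norm ball in $\bb{R}^k$ gives
\[
  N\bigl(\eps/2,\,\Delta(k),\,\norm{\cdot}_1\bigr) \;\le\; N\bigl(\eps/2,\,B_1(k),\,\norm{\cdot}_1\bigr) \;\le\; (1 + 4/\eps)^k \;\le\; (6/\eps)^k,
\]
valid for $\eps \le 2$; the regime $\eps > 2$ is trivial because $\conv_k(T) \subset B(H)$ already has covering number $1$. Taking logarithms in the product bound yields exactly \eqref{eq:khull-informal}.

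The argument is genuinely routine once the two-factor structure is identified; I don't expect any serious obstacle. The only mild subtlety is matching the constant $6$ in $\log(6/\eps)$: one has to apply the volumetric net bound at scale $\eps/2$ rather than $\eps$, which is why $1 + 2/(\eps/2) = 1 + 4/\eps$ appears, and this is tight exactly at $\eps = 2$, beyond which the inequality is vacuous anyway.
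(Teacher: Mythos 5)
Your proof is correct and takes essentially the same route as the paper's: discretize the atoms with an $(\eps/2)$-net of $T$ and the weights with an $(\eps/2)$-net of $\Delta(k)$ in $\ell^1$, combine via the triangle inequality using that the net points lie in the unit ball, and count at most $N(\eps/2,T)^k(6/\eps)^k$ candidate sums. The only cosmetic difference is that the paper does not insist the net of $T$ lie inside $T$ (minimal coverings need not), instead projecting it onto the unit ball so the $\ell^1$-Lipschitz step still applies; your assumption $U \subset T$ is harmless and can be justified the same way.
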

We observe that if the compressibility satisfies $c(T,\eps) \le k$, the metric entropy of $\conv(T)$ cannot be much larger than that of $\conv_k(T)$. By \eqref{eq:khull-informal}, the metric entropy of $\conv_k(T)$ is in turn only roughly $k$ times the metric entropy of $T$. 

Thus, a lower bound on the metric entropy of $\conv(T)$ relative to that of $T$ can be translated into a domain-specific lower bound on the compressibility. By \eqref{eq:comp-sparse}, we recover a lower bound on the sparse programming complexity. This is the essence of Proposition \ref{prop:basic-polytope} and Theorem \ref{thm:infinite-vertex} below.
\begin{rmk} \emph{Carath\'eodory's theorem} \citep[Theorem A.1.3]{artstein-avidan_asymptotic_2015}, a classical result in convex geometry, implies that for $T \subset \bb{R}^d$,
\[d+1 \ge c(0,T) \ge c(\eps/2,T).\] Thus, the above argument cannot give a non-trivial lower bound on the iteration complexity that is larger than the ambient dimension, $d$. It is therefore quite surprising that it recovers tight bounds on the iteration complexity of the form $\tilde \Omega(d\log(1/\eps))$. 
\end{rmk}

\subsection{Notation}
Throughout the paper, the notation $f \lesssim g$ will be used to denote an inequality that holds up to some universal positive constant, and $f(n) \ll g(n)$ will be used interchangeably with $f(n) = o(g(n))$. Similarly, $C$, $C'$, $C''$, etc.~will denote a placeholder for a sufficiently large positive constant, which may not be the same across displays. A glossary of these and other notational conventions may be found in Appendix \ref{sec:nota}.
\section{Lower Bounds}
In view of the above discussion, bounds on the metric entropy can be used to place restrictions on the convergence rate of any sparse programming algorithm. This section explores various applications of this observation, both for constructing general-purpose bounds and for studying specific examples. Omitted proofs are contained in Appendix \ref{sec:lb}. We begin with the following result.
\begin{proposition} \label{prop:basic-polytope}
  Let $P$ be a polytope in $\bb{R}^d$ with $n$ vertices, of unit diameter, whose $\ell^2(d)$ covering numbers are denoted $N(P,\eps).$
  Then, for any $\eps \le 1$ the sparse programming complexity for quadratic objectives over $P$ satisfies
  \[S(P,\epsilon^2/4) \ge \frac{\log N(P,\eps)}{3 + \log n + \log(1/\epsilon)} .\]
\end{proposition}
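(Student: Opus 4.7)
The plan is to execute the three-step chain anticipated in the paragraph preceding the statement: convert a lower bound on the metric entropy of $\conv(T) = P$ into a lower bound on the compressibility $c(T,\cdot)$ via Lemma \ref{lem:khull-informal}, and thence into a lower bound on $S(P,\cdot)$ via the inequality $c(T,\eps) \le S(T,\eps^2)$ from \eqref{eq:comp-sparse}. Throughout, $T$ denotes the vertex set of $P$, so that $|T| \le n$ and $\conv(T) = P$.

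First I would normalize. Since sparse programming complexity, compressibility, and covering numbers are all translation invariant (the quadratic cost class $F$ is closed under the substitution $y \mapsto y + v$), and since $P$ has unit diameter, I would translate by any fixed vertex so that $P$, and hence $T$, sits inside the closed unit ball of $\bb{R}^d$; this is exactly the hypothesis of Lemma \ref{lem:khull-informal}. Next, for a parameter $\delta \in (0,1]$ I would set $k = c(T,\delta)$ and chain together the following three observations: (i) $P \subset \conv_k(T) + \delta B(H)$ by definition of compressibility, so any $\delta$-net of $\conv_k(T)$ is automatically a $2\delta$-net of $P$; (ii) Lemma \ref{lem:khull-informal} combined with the trivial bound $\log N(\delta/2, T) \le \log |T| \le \log n$ yields $\log N(\delta, \conv_k(T)) \le k(\log n + \log(6/\delta))$; (iii) \eqref{eq:comp-sparse} gives $k \le S(P, \delta^2)$. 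Stringing these together produces
\[
\log N(2\delta, P) \;\le\; S(P,\delta^2)\bigl(\log n + \log(6/\delta)\bigr).
\]

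Finally I would set $\eps = 2\delta$, rearrange, and absorb $\log 12 < 3$ into the constant to match the denominator in the statement. The entire argument is essentially a rearrangement of ingredients already in hand; the only step that demands any care is the normalization together with careful bookkeeping of the factor-of-$2$ scalings between $\delta$, $\eps$, and the $\eps/2$ appearing inside Lemma \ref{lem:khull-informal}. I do not foresee a genuine obstacle\textemdash the proposition is the canonical instantiation of the paper's general strategy for a polytope with finitely many vertices, and all of the real work is already packaged into Lemma \ref{lem:khull-informal} and \eqref{eq:comp-sparse}.
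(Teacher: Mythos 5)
Your proposal is correct: chaining the definition of compressibility, Lemma \ref{lem:khull-informal} with the trivial bound $\log N(T,\delta/2)\le\log n$, and the relation $c(T,\delta)\le S(T,\delta^2)$ of \eqref{eq:comp-sparse} yields $\log N(P,2\delta)\le S(P,\delta^2)\,(\log n+\log(6/\delta))$, and with $\delta=\eps/2$ the denominator is $\log n+\log 12+\log(1/\eps)$, which is indeed at most $3+\log n+\log(1/\eps)$ since $\log 12\approx 2.49$; the normalization inside the unit ball via unit diameter is also handled correctly. The route differs from the paper's proof of this proposition only in packaging: the paper argues directly from the hypothetical convergence guarantee, covering the weight simplex in $\ell^1$ (Lemmas \ref{lem:l1-weights-lipschitz} and \ref{lem:l1-simp-cover}) and counting the at most $\#T\binom{n}{t}\le(6en/(t\eps))^t$ values of $\sum_i[\lambda]_i x_i(p)$, which gives the same denominator via $\log(6e)\le 3$ (its sharper $\binom{n}{t}\le(en/t)^t$ factor is discarded anyway). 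Your modular argument is essentially the paper's own proof of Theorem \ref{thm:infinite-vertex} specialized to a finite vertex set, so it buys immediate generality to infinite $V$ at no cost in constants, whereas the paper's direct count keeps Proposition \ref{prop:basic-polytope} self-contained and independent of Lemma \ref{lem:khull-informal}; both are sound and give the stated bound.
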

\begin{proof}
Without any loss of generality, we may assume that $P$ is a subset of the unit ball. Suppose that there exists a sparse programming algorithm which, for all $p \in P$, converges to within a tolerance of $(\epsilon/2)^2$ for the objective $\norm{x-p}^2$ in fewer than $t$ iterations. 
Then, for each $p \in P$ there exist $t$ vertices $(x_1(p), \ldots, x_t(p))$ and $\lambda \in \Delta(t)$ with the property that 
  \[\norm{p - \sum_{i=1}^t \lambda_ix_i(p)} \le  \epsilon/2.\] 
  
  Now, let $T$ be a minimal covering of $\Delta^t$ in $\ell^1(d)$ at resolution $\epsilon/2$, and let $[\lambda]$ denote the projection of $\lambda \in \Delta^t$ onto the nearest element of $T$ (breaking ties arbitrarily). It is straightforward to verify the following facts.

  \begin{lemma} \label{lem:l1-weights-lipschitz}
    If $u_i \in \bb{R}^d$ with $\norm{u_i} \le 1$ for all $1 \le i \le t$, and $\lambda, \lambda' \in \bb{R}^t$, then we have 
    \begin{equation}
      \norm{\sum_{i=1}^t (\lambda_i - \lambda'_i)u_i} \le \norm{\lambda-\lambda'}_1. \label{eq:l1-lispschitz}
    \end{equation}
  \end{lemma}
  \begin{lemma}\label{lem:l1-simp-cover}
    The $\ell_1(t)$ covering numbers of the probability simplex $\Delta(t) \subset \bb{R}^t$ satisfy
    \begin{equation}
      N(\Delta(t),\eps, \ell^1(t)) \le \left(\frac{3}{\eps}\right)^{t}
      \label{eq:l1-simp-cover}
    \end{equation}
  \end{lemma}
  By \eqref{eq:l1-lispschitz} and the triangle inequality, we have
  \[\norm{p - \sum_{i=1}^t [\lambda]_ix_i(p)} \le  \epsilon.\] 
  Moreover, the total number of possible values taken by the expression 
  \[\sum_{i=1}^t [\lambda]_ix_i(p)\]
  can be upper bounded as 
  \[ \#T \binom{n}{t}  \le \#T\left(\frac{en}{t}\right)^t \le \left(\frac{6en}{t\epsilon}\right)^t,\] where we use the estimate of $\#T$ given by \eqref{eq:l1-simp-cover}.
  Thus,  
  \begin{align*}
     \log N(P,\epsilon) &\le t(\log n + \log(6e) - \log t + \log(1/\epsilon)) \\
                       &\le t(\log n + \log(6e) + \log(1/\epsilon)).
  \end{align*}
  Rearranging and noting that the choice of sparse programming algorithm was arbitrary, we deduce that 
  \[\frac{\log N(P,\epsilon)}{\log(6e) + \log n + \log(1/\epsilon)} \le S(\eps^2/4).\]
  Noting that $\log(6e) \le 3$, the proof is complete. 
\end{proof}
As a first illustration of our result, let's consider the unit ball in $\ell^1(d)$, which we'll denote by $B_1(d)$. Plugging in the number of vertices and a volumetric estimate of the covering number gives the following lower bound.
\begin{example}[$\ell^1(d)$ ball]  
\[S(B_1(d),\d/d) \ge \frac{d \log(1/4\d)}{3 + \frac{3}{2}\log d +\log(1/2\d)}\]
Thus, when the dimension $d$ is comparable to the inverse error tolerance $1/\eps$, the number of iterations required for any sparse programming algorithm is $\tilde\Omega(1/\eps)$.
\end{example}
\begin{proof}
  By applying a volume argument (Lemma \ref{lem:vol} in appendix) to the inscribed Euclidean ball, which has radius $1/\sqrt{d}$, one obtains 
  \[N(B_1(d),\eps) \ge \left(\frac{1}{\eps \sqrt d}\right)^d.\]
  Taking $\eps = 4\d/\sqrt{d}$ and plugging into Proposition \eqref{prop:basic-polytope} gives us the lower bound. 
\end{proof}
The above example illustrates a general approach for constructing lower bounds for arbitrary polytopes by means of the volume argument.
\begin{theorem}\label{thm:volume}
  Let $P$ be a polytope of diameter at most $1$. Then 
  \begin{equation}\label{eq:volume}
    S(P,\eps^2/4) \ge d \left( \frac{ \log(\varv(P)^{\f 1 d}) + \log(1/\eps)}{3+\log n + \log(1/\eps)} \right),\end{equation}
  where $\varv(P) = {\vol_d(P)\Gamma(1+\textstyle\frac{d}{2})}/{\pi^{\f d 2}}$ is the ratio between $\vol_d(P)$ and the volume of the unit Euclidean ball, $B_2(d)$.
\end{theorem}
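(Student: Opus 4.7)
The plan is to reduce the theorem directly to Proposition \ref{prop:basic-polytope} by supplying a volumetric lower bound on the covering number $N(P,\eps)$. Since the proposition already produces a lower bound on $S(P,\eps^2/4)$ with numerator $\log N(P,\eps)$ and denominator $3 + \log n + \log(1/\eps)$, all that is needed is to show that
\[ \log N(P,\eps) \;\ge\; d\,\log(\varv(P)^{\f 1 d}) + d\,\log(1/\eps), \]
after which the theorem follows immediately by division.

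To obtain this, I would apply the classical volume-packing argument (Lemma \ref{lem:vol}). Any $\eps$-covering of $P$ by Euclidean balls of radius $\eps$ must have total volume at least $\vol_d(P)$, which yields
\[ N(P,\eps) \;\ge\; \f{\vol_d(P)}{\eps^d \vol_d(B_2(d))} \;=\; \f{\varv(P)}{\eps^d}. \]
Taking logarithms gives exactly the displayed lower bound on $\log N(P,\eps)$, and then substituting into Proposition \ref{prop:basic-polytope} produces the factor of $d$ in the numerator of \eqref{eq:volume}. The diameter hypothesis $\mr{diam}(P) \le 1$ is exactly what is needed to invoke Proposition \ref{prop:basic-polytope}, which assumes $P$ lies in the unit ball after a harmless translation.

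There is essentially no obstacle here: the only thing to be careful about is that the volume argument indeed requires only that the covering balls cover $P$ (they may extend outside), so it goes through for any measurable $P$ regardless of convexity or shape. Because the bound $N(P,\eps) \ge \varv(P)\eps^{-d}$ is completely generic, the theorem effectively packages Proposition \ref{prop:basic-polytope} together with a volumetric estimate, and the rest of the proof is just algebra. The conceptual content of the theorem lies in recognizing that the polytope's normalized volume $\varv(P)^{1/d}$ is the right quantity to substitute into the metric entropy slot, so that downstream applications (such as the $\ell^1(d)$ ball, and the random polytope analyses to follow) reduce to estimating $\varv(P)$ and $n$.
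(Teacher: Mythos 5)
Your proposal is correct and matches the paper's own proof: both apply the volumetric lower bound $N(P,\eps) \ge \vol_d(P)/(\eps^d \vol_d(B_2(d))) = \varv(P)\eps^{-d}$ from Lemma \ref{lem:vol}, take logarithms, and substitute directly into Proposition \ref{prop:basic-polytope}. No gaps to report.
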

Note that $P$ can be isometrically transformed to lie inside the unit ball, hence $\varv$ is at most $1$. By restricting to the affine hull of $P$ we must have $n \ge d+1$ or else the bound is vacuous. Meanwhile, for the naive implementation of Frank-Wolfe to be tractable in moderate to high dimension, we must have $n \le \mathrm{poly}(d)$, so the first summand of the denominator would be at most $O(\log d)$. 

We show subsequently that a random polytope with $\mathrm{poly}(d)$ vertices has $\varv^{\f 1 d} \ge \Omega(\nicefrac{1}{2\sqrt{d}})$, so our lower bound for the $\ell^1(d)$ ball applies in the average case, up to constants. Thus, it appears that the ``curse of dimensionality,'' namely the necessity for $\tilde\Omega(d)$ iterations to achieve an error tolerance of order $1/d$, is a somewhat generic phenomenon. 

\begin{rmk}
  In a hypothetical scenario where $\varv(P)^{\f 1 d} = \Omega(1)$ and $n \le \mathrm{poly}(d)$, our result \eqref{eq:volume} would imply that $d/\log(d)$ iterations are required to achieve a \emph{constant} error tolerance $\eps < \varv(P)^{\frac 1 d}$, which would contradict the known $O(1/\eps)$ dimension-free iteration complexity of Frank-Wolfe \citep[Theorem 1]{jaggi_revisiting_2013}. This implies that any sequence of polytopes $d \mapsto P_d \subset \bb{R}^d$ containing an $\Omega(1)$ fraction of their circumscribed spheres cannot have a polynomial-in-$d$ number of vertices. 
  
  Indeed, Maurey's lemma (see \citet[Lemma 2]{SAF_1980-1981____A5_0}), which may be derived as a consequence of Frank-Wolfe \citep{mirrokni_tight_2017,combettes_revisiting_2021}, tells us that a polytope $P$ of unit diameter on $n$ vertices has entropy at most $C\eps^{-2}\log(n)$, while $\varv(P)^{\f 1 d} \ge c$ implies the entropy is at least $cd\log(1/\eps)$ (by the volume argument). Choosing $\eps = 1/e$ verifies that $n \ge e^{cd/(Ce^2)}$.
\end{rmk}

We complete our study of finite polytopes with an average-case lower bound showing that the inequality $\varv^{\f 1 d} \gtrsim 1/\sqrt{d}$ satisfied by the $\ell^1(d)$ ball is generically satisfied by a polytope with polynomially many vertices. 

\begin{theorem}\label{thm:random-polytope}
  Let $v = (v_1,\,v_2,\ldots, v_m)$ be $m$ points independently distributed according to the uniform measure on the sphere $S^{d-1} = \Set[x \in \bb{R}^d]{\norm{x}=1}$, with $d \ge 6$. Let $P = \conv(v)$ denote their convex hull. Then there exists a universal $C$ such that if $\alpha d^\alpha \ge m \ge Cd\log^2(d)\log(1/\eta)$ for some $\a > 0$, it holds with probability $1-\eta$ that
  \[\varv(P) \ge \left(\frac{1}{2\sqrt{d}}\right)^d, \quad S(P,\d/d) \ge \frac{d\log(1/\d)}{C(\a + \f 1 2)\log d + \log(1/\d)}.\]
  Moreover, for any particular choice of $m$, the two statements hold with probability at least $1-2\exp(d\log(m)-m/20)$.
\end{theorem}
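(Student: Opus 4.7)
My plan is to split the statement into two claims proven sequentially. Claim (A): the geometric volume bound $\varv(P) \geq (1/(2\sqrt{d}))^d$, which is the probabilistic part. Claim (B): the lower bound on $S(P, \d/d)$, which follows deterministically from (A) via Theorem \ref{thm:volume}. The per-$m$ probability bound $1 - 2\exp(d\log(m) - m/20)$ will arise from the proof of (A), and the first part of the theorem then follows by a routine calculation using the hypothesis $m \geq Cd\log^2(d)\log(1/\eta)$ together with $\log m \leq (\a+1)\log d$ to ensure $m/20 \geq d\log m + \log(2/\eta)$ for $C$ sufficiently large.

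For claim (A), I would show the stronger geometric fact that $P$ contains the Euclidean ball of radius $r := 1/(2\sqrt{d})$ around the origin; this immediately gives $\vol_d(P) \geq r^d\vol_d(B_2(d))$, hence $\varv(P) \geq r^d$. Ball containment is equivalent to the support function $h_P(u) := \max_i \angb{v_i, u}$ satisfying $h_P(u) \geq r$ for every $u \in S^{d-1}$. I would establish this via three ingredients: \textbf{(i)} a pointwise spherical cap estimate showing that for each fixed $u$, $\Pr{\angb{v_i, u} \geq 1/\sqrt{d}} \geq 1/20$ for $d \geq 6$, whence $\Pr{h_P(u) < 1/\sqrt{d}} \leq (19/20)^m \leq e^{-m/20}$ by independence; \textbf{(ii)} Lipschitzness of $h_P$, which is $1$-Lipschitz since $\|v_i\| = 1$; and \textbf{(iii)} an $\epsilon$-net $N_\epsilon$ of $S^{d-1}$ of size at most $m^d$ (taking $\epsilon = 3/m$), so that $h_P \geq 1/\sqrt{d}$ on $N_\epsilon$ combined with Lipschitzness gives $h_P \geq 1/\sqrt{d} - 3/m \geq r$ everywhere provided $m \geq 6\sqrt{d}$ (implied by the other hypotheses). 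The union bound then yields failure probability at most $m^d e^{-m/20} = \exp(d\log(m) - m/20)$, with the factor of $2$ in the theorem absorbing any constants lost in the absorption step.

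For claim (B), I apply Theorem \ref{thm:volume} to the rescaled polytope $P' := P/2$, which has diameter at most $1$ since the $v_i$ lie on $S^{d-1}$. Rescaling gives $\varv(P')^{1/d} \geq 1/(4\sqrt{d})$, and the scaling identity $S(P, \d/d) = S(P', \d/(4d))$ (immediate from unpacking the definition of sparse programming complexity under the change of variable $y \mapsto y/2$) lets me translate the Theorem \ref{thm:volume} bound for $P'$ back to a statement about $P$. Setting $\eps = \sqrt{\d/d}$ and using $\log n = \log m \leq \a \log d + \log \a$, the numerator $\log(1/\eps) + \log(\varv(P')^{1/d})$ becomes $(1/2)\log(1/\d) - O(1)$ and the denominator $3 + \log m + \log(1/\eps)$ is at most $C(\a + 1/2)\log d + (1/2)\log(1/\d)$; straightforward simplification yields the claimed bound on $S(P, \d/d)$ in the regime of small $\d$ (where the numerator is positive).

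The main obstacle is ingredient \textbf{(i)}, the explicit cap estimate with the specific constant $1/20$ matching the $m/20$ exponent in the final probability. A positive lower bound on $\Pr{\angb{v_i, u} \geq 1/\sqrt{d}}$ is classical, but obtaining the constant $1/20$ uniformly in $d \geq 6$ requires some care. This can be handled either by bounding the density $c_d(1-x^2)^{(d-3)/2}$ from below on an interval such as $[1/\sqrt{d}, 2/\sqrt{d}]$, using Stirling for the normalizing constant $c_d$ and monotonicity of the exponential factor; or by invoking a sharp classical tail estimate like $\Pr{X \geq t} \geq \frac{1}{6t\sqrt{d}}(1-t^2)^{(d-1)/2}$, which at $t = 1/\sqrt{d}$ tends to $e^{-1/2}/6 > 1/20$ as $d \to \infty$. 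The lower bound $d \geq 6$ in the theorem is consistent with what is needed to make this constant uniform.
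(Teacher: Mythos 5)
Your proposal is correct in substance, and part (B) (rescale to diameter one, feed the inradius bound into Theorem \ref{thm:volume} with $\eps \asymp \sqrt{\d/d}$, absorb constants) is essentially the paper's own argument, up to the same harmless slack the paper itself incurs (it proves a bound at tolerance $\d/(16d)$ and renames $\d$; your $-\log 4$ in the numerator is absorbed the same way). Where you genuinely diverge is part (A). The paper does not use a net: it invokes a facet-counting lemma of Dyer--F\"uredi--McDiarmid as adapted by Pivovarov (if $rB_2(d)\not\subset P$ some $d$-subset of the points spans a facet meeting $rB_2(d)$, and a union bound over $\binom{m}{d}$ subsets gives $\bb{P}(rB_2(d)\not\subset P)\le 2\binom{m}{d}(1-\mu(r))^{m-d}$), combined with a lower bound $\mu(1/(2\sqrt d))>1/20$ proved via the Gaussian--spherical coupling and Borell's inequality. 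Your support-function/$\eps$-net argument is more elementary and self-contained, avoids general-position considerations, and extends just as readily to other vertex distributions; but note that it needs the cap estimate at the \emph{larger} height $1/\sqrt d$ (to leave room for the $3/m$ net error), and the paper's Proposition \ref{prop:big-cap-lb} cannot deliver that for small $d$ (with $(1+t)^{-1}\eps = 1/\sqrt d$ the correction term $e^{-t^2d/2}$ swamps the Gaussian tail), so you would indeed have to carry out the direct density computation or the classical bound $\mu(t)\ge \frac{1}{6t\sqrt d}(1-t^2)^{(d-1)/2}$ that you cite --- both of which do give a constant exceeding $1/20$ uniformly for $d\ge 6$ --- whereas the paper's route only needs the easier estimate at height $1/(2\sqrt d)$. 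Your route also uses the mild extra condition $m\gtrsim\sqrt d$ (implied by the hypotheses) and reproduces the advertised failure probability $2\exp(d\log m - m/20)$, so the two approaches are quantitatively equivalent for the theorem as stated.
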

\begin{proof} We will provide a proof for the spherical case stated above; the promised extension to Gaussian random polytopes is postponed to the Appendix \ref{sec:gauss-poly}. The basis of the proof is a bound due to \citet{dyer1992volumes} and adapted to our context by \citet{pivovarov2007volume}, who also gives an analogous result for the Gaussian case.   While the full proof can be found in the referenced paper, a sketch of the argument is provided in Appendix \ref{sec:pivovarov2007volume-proof} for the reader's convenience. 
  \begin{lemma}[{\citet[Lemma 2.12]{pivovarov2007volume}}] \label{lem:pivovarov2007volume}
    Let $B_2(d)$ denote the unit ball in $\ell^2(d)$, and suppose that $d \ge 6$. Then
    \begin{equation}\label{eq:pivovarov2007volume}\bb{P}\left(rB_2(d) \not\subset P \right) \le 2\exp(d\log(m) - n\mu(r)),\end{equation} where $\mu(r)$ is the measure of the spherical cap of height $r$, namely $\mu(r)=\bb{P}(\bk{v_1}{u} \ge r)$ for a unit vector $u$. 
  \end{lemma}
  
  In order to complete the argument, we need a good lower bound on the quantity $\mu(\nicefrac{1}{2\sqrt{d}})$ so that the probability bound in \eqref{eq:pivovarov2007volume} converges to $0$. Fortunately, we are able to show the following. 
  \begin{lemma}\label{lem:bigcap} For all $d \ge 6$,
    \[\mu\left(\frac{1}{2\sqrt{d}}\right) > \frac{1}{20}.\]
  \end{lemma}
  The bound follows from a more general lower bound on the size of spherical caps (Proposition \ref{prop:big-cap-lb} in the appendix), which uses a coupling of the high-dimensional spherical and Gaussian distributions. Armed with this bound, we have 
  \[\bb{P}\left(\frac{B_2(p)}{2\sqrt{d}} \not\subset P \right) \le 2\exp(d\log(m) - m/20) \le \eta,\] as long as $m \ge Cd\log^2(d)\log(1/\eta)$ for some large enough universal constant $C$. Finally, we note that on the complimentary event, we have 
  \[P \supset \frac{B_2(p)}{2\sqrt{d}} 
  \implies 
  \varv(P) \ge \varv\left(\frac{B_2(p)}{2\sqrt{d}}\right) 
  = \left(\frac{1}{2\sqrt{d}} \right)^d,
  \] which is what we aimed to show. On this event, we may lower bound the sparse programming complexity by plugging the above volume estimate into Theorem \ref{thm:volume} with $\eps = \sqrt{\d/(4d)}$. Noting that $\log(n) \le \a \log(d) + \log(\a)$, we find that
  \[S\left(P, \f{\d}{16d}\right) \ge \frac{\f 1 2 d\log(1/\d)}{3 + \a\log d + \log \a + \log(\sqrt{4d/\d})}.\] Since we have assumed $d \ge 6 \ge e$, the extra terms in the denominator may be absorbed into an appropriate universal constant factor $C$ adjoining $(\a + \f 1 2) \log(d)$, yielding the result.
\end{proof}
\begin{rmk}
  It is evident from the proof of Lemma \ref{lem:pivovarov2007volume} that the result can be extended to any independent random points $v_i \in \bb{R^d}$ each satisfying 
  \[\inf_{\norm{u}=1} \bb{P}\left(\bk{v_i}{u} \ge \frac{\d}{\sqrt{d}}\right) \ge \d\]
  for some $\d > 0$, i.e.~the vertices cannot be too concentrated near any linear subspace. 
\end{rmk}

\subsection*{Large or infinite vertex sets}
If the domain $P$ admits a more sophisticated linear programming oracle\textemdash for example, in cases where a self-concordant barrier can be efficiently computed\textemdash then it is possible to dispense with the requirement that $n = \mr{poly}(d)$, which improves the situation considerably.

As an example in which our lower bounds begin to fail, we consider the suitably normalized unit cube $B_\infty(d)/\sqrt{d} = [\nicefrac{-1}{\sqrt d},\nicefrac{1}{\sqrt d}]^d$, and rescale the error tolerance accordingly.

\begin{example}[Unit cube] \label{ex:unit-cube}
  \[S(B_\infty(d),d\eps^2/4) \ge \frac{d(\log(1/\eps) - C)}{3 + d\log 2 + \log(1/\eps)}\] 
  In particular, taking $\epsilon = e^{-C}\sqrt{2\d/(d\log d)}$, we find that $\Omega(\log(d/\delta))$ iterations are required to achieve an error tolerance of $e^{-2C}\delta/\log(d)$. 
\end{example}

\begin{rmk} Here, the number of iterations required to achieve a \emph{constant} error tolerance $\eps$ is $\Omega(\log d)$. This would seem to contradict an $O(1/\eps)$ bound, but reflects the fact that the diameter of the domain\textemdash in this case $\sqrt d$\textemdash is hidden in the constant of the usual bound (see e.g.~\citet{jaggi_revisiting_2013}).
\end{rmk}

In this case the dimension must be exponential in the number of iterations for an $O(1/\eps)$ upper bound to be tight. For more reasonable values of $\eps$ and $d$, the lower bound behaves like $\log(1/\eps)$. We therefore cannot rule out an $O(d\log(1/\eps))$ iteration complexity upper bound for conditional gradient on the cube, nor even $o(d \log(1/\eps))$ for reasonable values of $d$ and $\eps$.

This differs significantly from the best known upper bounds for conditional gradient procedures on the unit cube, which take the form $O(d^2\log(1/\eps))$ \citep{lacoste-julien_global_2015,garber_linearly_2016,pena_polytope_2019}. 
In fact, as we remarked in the introduction, the sparse programming complexity in dimension $d$ can never be greater than $(d+1)$ for any $\eps$, by Carath\'eodory's theorem.

These considerations extend naturally to other polytopes with exponentially many vertices that arise in combinatorial optimization such as matroid polytopes, flow polytopes, the Birkhoff polytope, and the $k$-marginal polytope on an $n$-clique.
\subsubsection*{Infinite vertex sets}
It is also possible to consider optimization over convex sets $S = \conv(V)$ for  infinite sets $V$, provided one replaces the cardinality $n = \#V$ with the covering number at scale $\epsilon$. The proof follows along the same lines as Proposition \ref{prop:basic-polytope}, and is deferred to the Appendix \ref{proof:infinite-vertex}. 
\begin{theorem}
\label{thm:infinite-vertex}
    Let $V \subset \bb{R}^d$ be a subset of the unit Euclidean ball.
    Then, for any $\eps \le 1$ the sparse programming complexity for quadratic objectives over $\conv(V)$ satisfies
    \[S(V,\epsilon^2/4) \ge \frac{\log N(\conv(V),\eps)}{4 + \log N(V,\eps/2) + \log(1/\epsilon)} .\]
\end{theorem}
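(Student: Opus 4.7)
The plan is to adapt the argument of Proposition~\ref{prop:basic-polytope} by replacing the role of the finite vertex count $n$ with the covering number $N(V,\eps/2)$. That is, rather than enumerate all $t$-subsets of vertices, I would first discretize $V$ itself by a minimum $\eps/2$-covering and then enumerate $t$-tuples drawn from that covering.

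Concretely, suppose a $t$-sparse programming algorithm $A$ achieves error $\eps^2/4$ uniformly over the quadratic cost family $\{f_p : p \in \conv(V)\}$. Then for each $p \in \conv(V)$ we obtain $\lambda(p) \in \Delta(t)$ and $x_1(p),\ldots,x_t(p) \in V$ satisfying $\|p - \sum_i \lambda_i(p) x_i(p)\| \le \eps/2$. I would then fix a minimum $\eps/2$-covering $U$ of $V$ with $|U| \le N(V,\eps/2)$, and replace each $x_i(p)$ by its nearest $\tilde x_i(p) \in U$; since $\lambda(p)$ is a convex combination, this substitution contributes at most $\eps/2$ to the total error. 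Following Proposition~\ref{prop:basic-polytope}, I would next cover $\Delta(t)$ by a minimum $\ell^1$-net $T$ at scale $\eps/2$ of size $|T| \le (6/\eps)^t$ (Lemma~\ref{lem:l1-simp-cover}), and replace $\lambda(p)$ by its projection $[\lambda(p)] \in T$. A minor modification of Lemma~\ref{lem:l1-weights-lipschitz} (noting that $\|\tilde x_i(p)\| \le 1 + \eps/2$) bounds this contribution by a universal constant times $\eps$.

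Summing the three sources of error, the set
\[\Big\{\textstyle\sum_{i=1}^t [\lambda]_i \tilde x_i \;:\; (\tilde x_1,\ldots,\tilde x_t) \in U^t,\; [\lambda] \in T\Big\}\]
is an $O(\eps)$-cover of $\conv(V)$ of cardinality at most $|U|^t \cdot |T| \le \bigl(6 N(V,\eps/2)/\eps\bigr)^t$. Taking logarithms yields
\[\log N(\conv(V), O(\eps)) \;\le\; t\bigl(\log N(V,\eps/2) + \log(1/\eps) + \log 6\bigr),\]
and rearranging---after rescaling $\eps$ by a harmless constant factor that is absorbed into the ``$4$'' in the denominator---produces the stated lower bound on $t = S(V,\eps^2/4)$.

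The main obstacle is bookkeeping rather than conceptual novelty: one has to balance the three error contributions (algorithm error $\eps/2$, vertex discretization $\eps/2$, weight discretization $\eps/2$) so that their sum is a constant multiple of $\eps$, and simultaneously track the product of the two cardinalities $|U|^t$ and $|T|$. A small subtlety is that when $U$ is an external covering, the approximate vertices $\tilde x_i$ may have norm slightly exceeding $1$, so the $\ell^1$-Lipschitz estimate must be applied with a modestly larger Lipschitz constant; this is what the constant $4$ (versus the $3$ appearing in Proposition~\ref{prop:basic-polytope}) is meant to absorb.
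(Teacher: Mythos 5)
Your proposal is correct in substance and follows essentially the same route as the paper: the paper simply packages your counting argument as Lemma \ref{lem:khull-informal} (whose proof is exactly your product of an $\eps/2$-cover of $V$ raised to the power $t$ with an $\ell^1$-cover of $\Delta(t)$, via Lemmas \ref{lem:l1-weights-lipschitz} and \ref{lem:l1-simp-cover}) and then combines it with the compressibility relation \eqref{eq:comp-sparse}, whereas you unroll the same count directly in the style of Proposition \ref{prop:basic-polytope}. The one imprecision is your last step: the cover you construct is of $\conv(V)$ at scale roughly $\tfrac{7}{4}\eps$ (algorithm $\eps/2$, vertex discretization $\eps/2$, weight discretization about $(1+\eps/2)\eps/2$), and a multiplicative rescaling of the argument of $N(\conv(V),\cdot)$ cannot be ``absorbed into the $4$,'' which is an additive term in the denominator; to match the stated scales you must either discretize $V$ at resolution $\eps/4$ and accept $\log N(V,\eps/4)$ in the denominator, or keep your budget and accept $\log N(\conv(V),2\eps)$ in the numerator. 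This constant-factor slack in the scales is of the same kind the paper's own chain incurs (its intermediate bound involves $\log N(V,\eps/4)$), so it is harmless in every application, but it should be stated rather than waved away; alternatively, as in the proof of Lemma \ref{lem:khull-informal}, you can project the external cover of $V$ onto the unit ball to avoid the inflated Lipschitz constant altogether.
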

A nice application of Theorem \ref{thm:infinite-vertex} is given by the nuclear norm ball, which is another setting where Frank-Wolfe finds frequent use and where a sharp lower bound may be derived from the entropy calculation. 
\begin{example}[Nuclear norm ball]\label{ex:nuclear}
  For $1 \le m < n$, let $B_1(n,m) \subset \bb{R}^{m \times n}$ denote the set of $m \times n$ matrices $A$ whose singular values $\sigma_i(A)$ satisfy
  \[\norm{A}_{S^1} = \sum_{i=1}^m |\sigma_i(A)| \le 1.\] Then \citet[Theorem 1]{1384521} showed that
  \[B_1(n,m) = \conv(V), \quad V = \Set[uv^\top]{u \in \bb{R}^m, v \in \bb{R}^n, \norm{u}_{\bb{R}^m} = \norm{v}_{\bb{R}^n} = 1}.\]
  In this context, we have by Theorem \ref{thm:infinite-vertex} that, whenever $m \ge e$,
  \[S(V, \eps^2/4) \ge \frac{mn\left(\log(1/\eps) - \log(m \wedge n)/2\right)}{6 + m + n + 2\log(1/\epsilon)}\]
  Taking $\eps = \sqrt{\d/(m\wedge n)}$, we find that $mn\log(1/\d)/[C(m+n)]$ iterations are required to achieve an error tolerance of $\d/[4(m\wedge n)]$, giving a sharp $\Omega(1/\d)$  lower bound when $m = \Theta(n) = \Theta(1/\d)$.
\end{example}
\begin{rmk}
  A corresponding linear rate for the nuclear norm ball cannot be derived from results known to the authors, since the set of extreme points is not discrete. Thus, sharpness of the $mn\log(1/\d)/(m+n)$ lower bound remains unresolved.
\end{rmk}
\subsubsection*{Strictly convex sets}
As the most extreme example in which our lower bound fails, let $V = \partial B$ denote the boundary of a closed, strictly convex set $B$. In this case, the entropy of $V$ and $B$ are equal up to constants and an additive factor $\log(1/\eps)$, and our lower bound is at most constant, for any value of $\epsilon$. 

This underscores the open problem highlighted by \citet{garber_faster_2015} of determining whether dimension-independent linear rates (or linear rates of any kind) are attainable for conditional gradient algorithms over strongly convex sets.

\begin{lemma}
  Let $B$ be a strictly convex subset of the unit ball, namely $B = \Set[x]{\vp(x) \le 0}$ for some function $\vp$ which satisfies
  $\vp(\lambda x + (1-\lambda)y) < \lambda\vp(x) + (1-\lambda)\vp(y)$ for $\lambda \in (0,1)$, and let $V=\partial B = \Set[x]{\vp(x)=0}$ denote the boundary. Then 
  \[\log N(B,2\epsilon) \le 2(\log N(V,\epsilon/2) + \log(6/\epsilon))\]
\end{lemma}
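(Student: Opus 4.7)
The plan is to reduce to Lemma \ref{lem:khull-informal} with $k = 2$ by establishing the identity $B = \conv_2(V)$: every point of the strictly convex body lies on a chord with both endpoints on the boundary. Granting this, $V \subseteq B$ lies in the unit ball, so Lemma \ref{lem:khull-informal} applied with $T = V$ and $k = 2$ immediately gives
\[
\log N(\epsilon, B) \;=\; \log N(\epsilon, \conv_2(V)) \;\le\; 2\bigl(\log N(\epsilon/2, V) + \log(6/\epsilon)\bigr),
\]
which is actually slightly sharper than the stated bound (whose left-hand side is $\log N(2\epsilon, B)$ instead).

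To prove $B = \conv_2(V)$, the inclusion $\conv_2(V) \subseteq B$ is immediate from convexity of $B \supseteq V$. For the reverse, fix $x \in B$; if $\varphi(x) = 0$ then $x \in V$ already, so suppose $\varphi(x) < 0$, in which case $x$ is an interior point of $B$. Take any affine line $L$ through $x$. Since $B$ is closed (a sublevel set of the continuous convex function $\varphi$) and bounded (contained in the unit ball), $L \cap B$ is a compact segment $[v_1, v_2]$ containing $x$ in its relative interior, and the endpoints satisfy $\varphi(v_1) = \varphi(v_2) = 0$ by definition of the topological boundary; thus $v_1, v_2 \in V$. Writing $x = \lambda v_1 + (1-\lambda)v_2$ for the appropriate $\lambda \in [0,1]$ places $x$ in $\conv_2(V)$.

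The only subtle point, and thus the main (if minor) obstacle, is the boundary identification $V = \partial B$, which is precisely where strict convexity enters. If some interior point $x$ satisfied $\varphi(x) = 0$, then choosing nearby $y, z \in B$ with $x = \tfrac12(y+z)$ and applying $\varphi(x) < \tfrac12(\varphi(y) + \varphi(z))$ would force $\varphi(y) + \varphi(z) > 0$, contradicting $y, z \in B$; hence $V$ contains no interior points of $B$. The reverse inclusion $\partial B \subseteq V$ is the standard consequence of continuity of $\varphi$. With this identification in hand, the chord argument goes through and the bound follows from Lemma \ref{lem:khull-informal}.
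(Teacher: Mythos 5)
Your proof is correct and follows essentially the same route as the paper's: show $B \subseteq \conv_2(V)$ by passing a line through the point and noting the endpoints of its intersection with $B$ lie on the boundary, then invoke Lemma \ref{lem:khull-informal} with $k=2$. Your additional care in identifying $\{\vp = 0\}$ with the topological boundary, and your observation that the argument actually yields the slightly sharper bound with $\log N(B,\epsilon)$ on the left, are fine refinements of the same argument.
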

\begin{proof}
  Let $u$ be an arbitrary unit vector and put $\psi_x(\lambda) = x + \lambda u$. Then, $\Set[\lambda]{\psi_x(\lambda) \in B}$ is a closed and bounded interval $[l_x, r_x] \subset \bb{R}$, and we can easily verify using strict convexity that $\psi_x(l_x), \psi_x(r_x) \in \partial B$. It follows that $B \subset \conv_2(\partial B)$, since $x$ is a convex combination of $\psi_x(l_x)$ and $\psi_x(r_x)$. The proof is then complete by applying Lemma \ref{lem:khull-informal}.
\end{proof}
\section{Stochastic Sparse Programming}
In this section, we'll show that it is possible to extract {statistical} guarantees simply by analyzing the convergence rate of a conditional gradient algorithm. This is done by appealing to the low complexity of the set of conditional gradient iterates.

For simplicity of exposition, we'll illustrate this section's results using the Gaussian sequence model. 
Let's suppose we observe
\[y = \mu^* + g/\sqrt{n},\] where $g$ is a standard Gaussian in $\bb{R}^n$ and $\mu^* \in \bb{R}^n$ is unknown. We will be interested in controlling the excess mean square error relative to some class $F$, namely 
\[\c E(\hat \mu; F) = \norm{\mu^*-\hat \mu}_2^2 -  \inf_{\mu \in F}\norm{\mu^*-\mu}_2^2.\]
We now state a bound that allows us to exploit the sparsity of Frank-Wolfe iterates as a form of regularization, although it applies to other forms of regularization as well. 
\begin{proposition} \label{thm:local-gaussian}
  Let $\hat \mu$ be an $\eps$-approximate minimizer of the empirical risk that takes values in $G \subset F$, let $\bar \mu \in F$ be the minimizer of the true risk, and suppose that $F$ is convex. Then
  \begin{equation}
    \c E(\hat \mu; F) 
    \le \eps + \sup_{\mu \in G - \bar\mu}\left\{\frac{4}{\sqrt n}\bk{g}{\mu} - \norm{\mu}_2^2 \right\} 
    \le \eps + \frac{8}{n}\left(\sup_{\mu \in \bar\mu - G}\bk{\f{\mu}{\norm{\mu}}}{g}\right)^2.
  \end{equation} 
\end{proposition}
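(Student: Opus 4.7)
The plan is to derive the first displayed inequality by combining two ingredients — a ``basic inequality'' from $\hat\mu$'s approximate optimality on the empirical risk, and a ``curvature'' inequality from convexity of $F$ — and then to obtain the second inequality from the first by completing the square in $\norm{\mu}$.

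First, I would write $R_n(\mu) := \norm{y - \mu}^2$ and expand using $y = \mu^* + g/\sqrt n$ to obtain the algebraic identity
\begin{equation*}
  R_n(\hat\mu) - R_n(\bar\mu) = \c E(\hat\mu; F) - \tfrac{2}{\sqrt n}\bk{g}{\hat\mu - \bar\mu}.
\end{equation*}
Since $\bar\mu \in F$ and $\hat\mu$ is an $\eps$-approximate minimizer of $R_n$ over $F$, the left side is at most $\eps$, giving the basic inequality
\begin{equation*}
  \c E(\hat\mu; F) \le \eps + \tfrac{2}{\sqrt n}\bk{g}{\nu}, \qquad \nu := \hat\mu - \bar\mu.
\end{equation*}
Next, because $F$ is convex, $\bar\mu$ is the Euclidean projection of $\mu^*$ onto $F$, so the standard Pythagorean/first-order argument yields the curvature bound $\c E(\mu; F) \ge \norm{\mu - \bar\mu}^2$ for every $\mu \in F$; applied to $\hat\mu$ this gives $\c E(\hat\mu; F) \ge \norm{\nu}^2$.

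Doubling the basic inequality and using the curvature bound $\norm{\nu}^2 \le \c E(\hat\mu; F)$ to cancel one copy of the excess risk and introduce a $-\norm{\nu}^2$ term then yields
\begin{equation*}
  \c E(\hat\mu; F) \le 2\c E(\hat\mu; F) - \norm{\nu}^2 \le 2\eps + \tfrac{4}{\sqrt n}\bk{g}{\nu} - \norm{\nu}^2,
\end{equation*}
and taking the supremum over $\nu \in G - \bar\mu$ reproduces the first displayed bound (up to a harmless factor on $\eps$, which I attribute to a convention in the definition of $\eps$-approximate minimality). For the second bound, parametrize $\mu = \norm{\mu}\, u$ with $\norm{u} = 1$ and complete the square in $\norm{\mu}$:
\begin{equation*}
  \tfrac{4}{\sqrt n}\bk{g}{\mu} - \norm{\mu}^2 = -\bigl(\norm{\mu} - \tfrac{2}{\sqrt n}\bk{g}{u}\bigr)^2 + \tfrac{4}{n}\bk{g}{u}^2 \le \tfrac{4}{n}\bk{g}{u}^2,
\end{equation*}
so the supremum is dominated by a Gaussian-width quantity on the unit sphere spanned by directions in $G - \bar\mu$, up to a universal constant (a small symmetrization accounts for the swap between $G - \bar\mu$ and $\bar\mu - G$).

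The essential mechanism — and the main thing to get right — is the cancellation between the unbounded linear Gaussian cross-term $\tfrac{2}{\sqrt n}\bk{g}{\nu}$ and the quadratic term $\norm{\nu}^2$ supplied by convexity of $F$. Without the convexity hypothesis, the basic inequality alone would be useless as $\norm{\nu}$ grows; with it, the quadratic penalty localizes the supremum to a neighborhood of $\bar\mu$ whose size is set by the noise. Every other step is elementary algebra.
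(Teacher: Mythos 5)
Your proof is correct and follows essentially the same route as the paper's: the basic inequality from $\eps$-approximate empirical-risk minimality, the quadratic margin $\c E(\hat\mu;F)\ge\norm{\hat\mu-\bar\mu}^2$ supplied by convexity of $F$ (the paper's projection lemma), and the doubling trick that produces the $-\norm{\nu}^2$ term, including the same harmless factor $2$ on $\eps$ that the paper also absorbs. The only departure is the final step, where you complete the square (giving the even better constant $4/n$) whereas the paper optimizes over a star-shaped hull of $\bar\mu-G$ to get $8/n$; the swap between $G-\bar\mu$ and $\bar\mu-G$ that you flag is present in the paper's own statement versus its appendix proof and is immaterial by symmetry of $g$.
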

\begin{rmk}
The utility of the above bound is that it is a sharp, localized bound, for example capable of recovering the oracle rates in sparse recovery, and yet it only depends on the local complexity of the approximating set $G$ and the approximation quality $\eps$.
\end{rmk}
Suppose we would like to approximate $\mu^*$ using a finite, normalized dictionary,
\[D = \{\mu_1,\mu_2,\ldots,\mu_m\} \subset S^{d-1}\]
To simplify computations, we assume $m \ge e$. Let $\eps(k)$ denote the error obtained by running a sparse programming algorithm for $k$ steps with the objective $x \mapsto \norm{y-x}^2$ (the empirical risk), and note that the output $\hat \mu_k$ of the procedure belongs to the linear span of at most $k$ elements of $D$. Putting $G = \cup_{S \in \binom{D}{k}} \vspan(S)$, we can compute that 
 \begin{align*}
  \bb{E}\sup_{\mu \in  G} \bk{\frac{\mu}{\norm{\mu}}}{g} 
  &=\bb{E}\sup_{S \subset D \atop \#S = k} \sup_{\mu \in \vspan(S)} \bk{\frac{\mu}{\norm{\mu}}}{g}
  \intertext{Note that the inner supremum is over a subset of a norm ball in a hyperplane of dimension $k$, so it is $1$-subgaussian (Lemma \ref{lem:borell-bk}) with expectation at most $\sqrt{k}$. Thus we have}
  &\le \sqrt{k} + \bb{E}\sup_{S \subset D \atop \#S = k} \left((1-\bb{E})\sup_{\mu \in \vspan(S)} \bk{\frac{\mu}{\norm{\mu}}}{g}\right) \lesssim \sqrt{k\log(m)}
\end{align*}
by the standard sub-Gaussian maximal inequality (Lemma \ref{lem:subg-max}). We obtain the following
\begin{proposition}\label{prop:ksparse-ub}
  Let $\hat \mu_k$ denote the output of running a sparse programming algorithm for $k$ steps to minimize the empirical risk in some convex class $F \subset \vspan(D)$, and let $\eps(k)$ denote the optimization error. Then with probability at least $1-\eta$,
  \begin{equation}
    \c E(\hat \mu_k; F) \lesssim \eps(k) + \frac{k\log(m)}{n} + \frac{\sqrt{k\log(m)\log(1/\eta)^2}}{n}.
  \end{equation}
\end{proposition}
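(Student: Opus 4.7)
The plan is to combine Proposition \ref{thm:local-gaussian} with the expectation computation displayed just above the statement, then upgrade it to a high-probability bound via Gaussian concentration. Because $\hat\mu_k$ is a sparse combination of $k$ atoms from $D$, its value lies in
\[
G \;=\; F \cap \bigcup_{S \in \binom{D}{k}} \vspan(S),
\]
a union of at most $\binom{m}{k} \le m^k$ pieces, each contained in a subspace of dimension at most $k$. Applying Proposition \ref{thm:local-gaussian} with this $G$ yields
\[
\c E(\hat\mu_k; F) \;\le\; \eps(k) + \frac{8}{n}\, Z^2, \qquad Z := \sup_{\mu \in \bar\mu - G} \bk{\mu/\norm{\mu}}{g}.
\]

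The first step is to reprise the expectation calculation from the paragraph preceding the proposition, obtaining $\bb{E}\, Z \lesssim \sqrt{k\log m}$: within each of the $\le m^k$ affine subspaces $\bar\mu - \vspan(S)$ the inner supremum is $1$-sub-Gaussian with expectation at most $\sqrt{k+1}$ (Cauchy--Schwarz against a Gaussian projection of dimension $k+1$), and the sub-Gaussian maximal inequality (Lemma \ref{lem:subg-max}) contributes an additional $\sqrt{\log m^k} = \sqrt{k\log m}$ factor. The second step is Gaussian concentration: $g \mapsto Z(g)$ is a $1$-Lipschitz function of $g$ because each $\mu/\norm{\mu}$ is a unit vector, so Borell's inequality (Lemma \ref{lem:borell-bk}) guarantees that with probability at least $1-\eta$,
\[
Z \;\lesssim\; \sqrt{k\log m} + \sqrt{\log(1/\eta)}.
\]
Squaring, dividing by $n$, and expanding the cross term via $2ab \le a^2 + b^2$ (or simply $(a+b)^2 \le 2a^2 + 2b^2$) yields $\c E(\hat\mu_k; F) \lesssim \eps(k) + \tfrac{k\log m}{n} + \tfrac{\log(1/\eta)}{n}$, which is no larger than the form stated in the proposition.

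The main obstacle is that $\bar\mu - G$ is not a cone through the origin, so the normalization $\mu/\norm{\mu}$ is not as clean as in a union-of-subspaces setting, and $\norm{\mu}$ can be small when $\bar\mu$ is close to $G$. I would sidestep this by enlarging each subspace from $\vspan(S)$ to $\vspan(S \cup \{\bar\mu\})$; this increases the dimension by at most one and, since the enlarged set contains the origin, each normalized inner supremum genuinely becomes a Gaussian supremum over the unit sphere of a $(k+1)$-dimensional subspace. The change is absorbed into universal constants, and the remainder of the argument proceeds verbatim from the expectation bound already carried out in the text.
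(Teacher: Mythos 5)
Your proposal is correct and follows essentially the same route as the paper: Proposition \ref{thm:local-gaussian}, the expectation bound $\bb{E}\,Z \lesssim \sqrt{k\log m}$ obtained by splitting into a $\sqrt{k}$ term plus a maximum over the $\binom{m}{k}\le m^k$ subspaces controlled by the sub-Gaussian maximal inequality (Lemma \ref{lem:subg-max}), and then Borell's inequality for the high-probability upgrade. Your enlargement of $\vspan(S)$ to $\vspan(S\cup\{\bar\mu\})$ is a small extra refinement (costing one dimension, absorbed in constants) of the affine-shift issue that the paper treats only implicitly, and your final bound $\eps(k)+k\log(m)/n+\log(1/\eta)/n$ indeed implies the stated one since $k\log m \ge 1$.
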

\subsection*{Convex Aggregation}
 If we take $F = \conv(D)$, it is straightforward to show that 
\begin{align*}
  \bb{E}\sup_{\mu \in G - \bar\mu}\left\{\frac{4}{\sqrt n}\bk{g}{\mu}\right\}
  &\le 4\sqrt{\log(m)/n},
\end{align*}
 which matches the minimax rate in the convex aggregation problem, attained by the exact empirical risk minimizer $\hat\mu$ \citep{tsybakov2003optimal}. 
Now, if the empirical risk minimization problem is ``well-conditioned'' for Frank-Wolfe, so that with high probability \[\eps(k) \ll \sqrt{\log(m)/n} \ll 1/k\] for some $k(n)$, then we may improve upon the minimax rate. For one concrete example, suppose $\bar\mu$ belongs to the $n^{\frac{\a-1}{2}}$-relative interior of $\conv(D)$, for some $\a \in (\f 1 2,1]$. We can then appeal to the following result adapted from \citet{garber_faster_2015}.
\begin{theorem}\label{thm:hazan-relint}
  Suppose $y$ belongs to the $r$-relative interior of $\conv(D)$. Then the sub-optimality of the $k^\th$ iterate of Frank-Wolfe for the objective $x\mapsto \norm{y-x}^2$ satisfies 
  \[\eps(k) \le \left(1-\frac{r^2}{16}\right)^k \le \exp\left\{\frac{-r^2k}{16}\right\}.\] 
\end{theorem}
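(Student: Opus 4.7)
The plan is to show that when $y$ lies in the $r$-relative interior of $\conv(D)$, the Frank--Wolfe gap at every iterate is automatically at least $\Omega(r\sqrt{\eps(k)})$, which forces geometric decay of the suboptimality $\eps(k)$.

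The first observation is that since $y \in \conv(D)$, the global minimum of $f(x) = \norm{y - x}^2$ over $\conv(D)$ equals $f^\star = 0$ and is attained at $x^\star = y$. Hence $\eps(k) = f(x_k) = \norm{y - x_k}^2$, and the gradient $\nabla f(x_k) = 2(x_k - y)$ always points directly away from the optimum. The Frank--Wolfe atom $v_k$ is therefore the point of $\conv(D)$ extremal in the direction $y - x_k$.

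The key step is to lower-bound the Frank--Wolfe gap $g_k := \bk{\nabla f(x_k)}{x_k - v_k}$ via a comparison with a judiciously chosen point in the domain. Set $\hat u_k = (y - x_k)/\norm{y - x_k}$; since $y - x_k$ lies in the linear subspace parallel to $\mathrm{aff}(\conv(D))$, so does $\hat u_k$. By the $r$-relative interior hypothesis, the point $y' := y + r\,\hat u_k$ lies in $\conv(D)$. Since $v_k$ minimizes $v \mapsto \bk{\nabla f(x_k)}{v}$ over $\conv(D)$, it does at least as well as $y'$, and a direct computation gives
\[g_k \;\ge\; \bk{2(x_k - y)}{x_k - y - r\,\hat u_k} \;=\; 2\norm{x_k - y}^2 + 2r\norm{x_k - y} \;\ge\; 2r\sqrt{\eps(k)}.\]

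The remaining ingredient is the standard Frank--Wolfe descent inequality. Because $f$ is exactly quadratic, the update $x_{k+1} = x_k + \eta(v_k - x_k)$ satisfies $f(x_{k+1}) = f(x_k) - \eta g_k + \eta^2 \norm{v_k - x_k}^2$. Cauchy--Schwarz applied to the inequality displayed above implies $\norm{v_k - x_k} \ge \sqrt{\eps(k)} + r$, which in turn shows the unconstrained minimizer $\eta^\star = g_k/(2\norm{v_k - x_k}^2)$ lies in $(0,1)$, so line search (or the short-step rule) delivers
\[\eps(k+1) \;\le\; \eps(k) - \frac{g_k^2}{4\,\norm{v_k - x_k}^2}.\]
Bounding $\norm{v_k - x_k}$ by the diameter of $\conv(D)$ and plugging in $g_k^2 \ge 4 r^2 \eps(k)$ produces the contraction $\eps(k+1) \le (1 - r^2/16)\,\eps(k)$; iterating and using $1 - x \le e^{-x}$ finishes the proof. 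The main obstacle is purely bookkeeping: one must certify that the short-step regime applies at every iteration (so the quadratic-in-$\eta$ descent is tight) and carefully track the normalization of the domain to pin down the constant $1/16$ rather than a generic $\Omega(r^2)$ rate.
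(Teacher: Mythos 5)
Your proof is the classical interior-optimum argument (Gu\'elat--Marcotte style), which is effectively the argument behind the result the paper imports: the paper gives no proof of Theorem \ref{thm:hazan-relint}, citing it as adapted from \citet{garber_faster_2015}, so your write-up supplies the missing derivation along exactly the expected lines. The core steps check out: since $x_k, y \in \conv(D)$, the unit vector $\hat u_k$ lies in the subspace parallel to the affine hull, so $y + r\hat u_k \in \conv(D)$, and comparing the Frank--Wolfe atom against this point gives $g_k \ge 2\eps(k) + 2r\sqrt{\eps(k)}$; Cauchy--Schwarz then yields $\norm{v_k - x_k} \ge \sqrt{\eps(k)} + r$, so the unconstrained step $\eta^\star$ is in $(0,1)$ and the exact quadratic identity gives $\eps(k+1) \le \eps(k)\bigl(1 - r^2/\mathrm{diam}(\conv(D))^2\bigr)$, which for $D \subset S^{d-1}$ (diameter at most $2$) is even stronger than the claimed factor $1 - r^2/16$. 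You are also right to flag that the statement implicitly concerns Frank--Wolfe with line search or short steps; the oblivious $2/(k+2)$ schedule would not give a linear rate.

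The one genuine loose end is the initial error. Iterating your contraction gives $\eps(k) \le (1 - r^2/16)^k\,\eps(0)$, whereas the theorem carries no $\eps(0)$ factor; with atoms on the sphere and $x_0 \in D$, $\eps(0) = \norm{y - x_0}^2$ can be as large as $4$, so the stated inequality does not follow verbatim for small $k$ from ``iterating finishes the proof.'' It is easily repaired: for the first line-search step, either $\eta^\star > 1$ and one checks $\eps(1) \le 0$, or $\eta^\star \le 1$ and
\[\eps(1) \le \eps(0) - \frac{\bk{y - x_0}{v_0 - x_0}^2}{\norm{v_0 - x_0}^2} \le \eps(0) - \frac{\bigl(\eps(0) + r\sqrt{\eps(0)}\bigr)^2}{4} \le \Bigl(1 - \frac{r^2}{4}\Bigr)^2 \le 1 - \frac{r^2}{16},\]
where the last step maximizes over $\eps(0) \in [0,4]$ and uses $r \le 1$; your recursion then gives the claim for all $k \ge 1$ (at $k=0$ the bound as literally stated can fail without a unit-diameter normalization, so it should be read for $k \ge 1$). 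Alternatively, one can simply carry the benign factor $\eps(0) \le 4$, which is immaterial for the way the theorem is used in Corollary \ref{cor:fast-relint}; but some such step should be made explicit.
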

Applying the result to our context, we obtain the following rate. The crux of the proof is showing that under the same hypotheses, $y$ also belongs to the relative interior with high probability.
\begin{corollary}\label{cor:fast-relint}
  If $\bar\mu$ belongs to the $cn^{\f{\a-1}{2}}$ relative interior of $\conv(D)$ for some $\f 1 2 < \a \le 1$, and $\log(m) \vee \log(1/\eta)^2 \le C\sqrt n$ for a sufficiently large universal constant $C$, then we may choose $k = 32\a n^{{1-\a}}\log(n)/c$ and deduce that with probability $1-2\eta$
  \[\c E(\hat \mu_k) \lesssim \frac{\mr{polylog}(m,n,1/\eta)}{n^\a}.\]
\end{corollary}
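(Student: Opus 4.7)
The plan is to control the excess risk via Proposition \ref{prop:ksparse-ub}, where the optimization error $\eps(k)$ will be bounded using Theorem \ref{thm:hazan-relint} after a preliminary argument that $y$ itself lies in the relative interior of $\conv(D)$ with high probability. The corollary then follows by choosing $k$ to trade off the optimization error against the complexity term.

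The first step is to show that, with probability at least $1-\eta$, the orthogonal projection of $y$ onto the affine hull $L$ of $\conv(D)$ lies in the $(c/2)\,n^{(\a-1)/2}$-relative interior of $\conv(D)$. Since $\bar\mu$ is the projection of $\mu^*$ onto $\conv(D)$, first-order optimality gives $\mu^* - \bar\mu \perp L - \bar\mu$, so only the Gaussian component $v_\parallel := \Pi_{L-\bar\mu}(g/\sqrt n)$ of $y-\bar\mu$ affects the relative-interior analysis. A $\chi^2$ tail bound combined with $\log(1/\eta)^2 \le C\sqrt n$ yields $\norm{v_\parallel} \le c\,n^{(\a-1)/2}/2$, and the triangle inequality places $\bar\mu + v_\parallel$ in the desired relative interior. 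Because $\norm{y - \cdot}^2$ and $\norm{\Pi_L(y) - \cdot}^2$ differ by a constant on $\conv(D)$, Frank-Wolfe produces identical iterates for either objective, and Theorem \ref{thm:hazan-relint} may be invoked at the parameter $r = (c/2)\,n^{(\a-1)/2}$.

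Next, Theorem \ref{thm:hazan-relint} gives $\eps(k) \le \exp(-c^2 n^{\a-1}k/64)$. Plugging the prescribed $k = 32\a n^{1-\a}\log(n)/c$ (up to absorbable constants) yields $\eps(k) \lesssim n^{-\a}$. Substituting this $k$ and $\eps(k)$ into Proposition \ref{prop:ksparse-ub}, and using the assumptions $\log m \vee \log(1/\eta)^2 \le C\sqrt n$, bounds each of the three summands by $\mr{polylog}(m,n,1/\eta)/n^\a$; a union bound over the Gaussian concentration event in the first step and the concentration event in Proposition \ref{prop:ksparse-ub} finishes the proof with probability $1-2\eta$.

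The main obstacle is the first step. The naive $\chi^2$ tail gives $\norm{v_\parallel} \lesssim \sqrt{d_F/n}$ with $d_F := \dim L \le m-1$, so forcing $\norm{v_\parallel}$ below $n^{(\a-1)/2}$ through volumetric bounds alone would demand $d_F \lesssim n^\a$, which is not directly implied by the weaker hypothesis $\log m \le C\sqrt n$. Reconciling this likely requires either an additional hypothesis on the effective dimension spanned by $D$ or a sharper argument that localizes to only those noise components that can actually perturb the Frank-Wolfe trajectory out of the relative interior, which I expect to be the most delicate piece of the proof.
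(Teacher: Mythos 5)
Your overall architecture (relative-interior condition $\Rightarrow$ Theorem \ref{thm:hazan-relint} bound on $\eps(k)$ $\Rightarrow$ Proposition \ref{prop:ksparse-ub} with $k \asymp n^{1-\a}\log n$, plus a union bound) matches the paper, but the first step, which you yourself flag as the delicate piece, is a genuine gap and your proposed route through it does not work. Bounding $\norm{\Pi_{L-\bar\mu}(g)}/\sqrt n$ by a $\chi^2$ tail is inherently dimension-dependent: it scales like $\sqrt{\dim(L)/n}$, and nothing in the hypotheses ($\log m \vee \log(1/\eta)^2 \lesssim \sqrt n$) controls $\dim(L)$, which may be as large as $m-1$. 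So the event you need, $\norm{\Pi_{L-\bar\mu}(g)}/\sqrt n \le (c/2)n^{(\a-1)/2}$, cannot be established this way, and no additional hypothesis on the effective dimension is needed in the actual proof.

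The missing idea is to control the distance of the \emph{constrained} empirical risk minimizer $\hat\mu = \arg\min_{x\in\conv(D)}\norm{y-x}^2$ to $\bar\mu$, rather than the parallel component of the noise. The paper applies Proposition \ref{thm:local-gaussian} with $G=F=\conv(D)$ to get $\c E(\hat\mu) \le \frac{4}{\sqrt n}\sup_{\mu\in F-\bar\mu}\bk{g}{\mu}$; since a linear functional over $\conv(D)$ is maximized at one of the $m$ atoms, the expected supremum is $\lesssim\sqrt{\log m}$, and Borell's inequality adds only a $\log(1/\eta)/\sqrt n$ term. Then Lemma \ref{lem:quadratic-margin} (strong convexity of the squared norm, with $\bar\mu$ the projection of $\mu^*$ onto $F$) converts the excess-risk bound into $\norm{\hat\mu-\bar\mu}^2 \le \c E(\hat\mu)$, giving the dimension-free estimate $\norm{\hat\mu-\bar\mu}\lesssim(\log m/n)^{1/4}+(\log(1/\eta)^2/n)^{1/4}\le (c/2)n^{\f{\a-1}{2}}$ under the stated hypotheses. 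This is exactly the localization you were looking for: the constrained projection contracts the noise far more than the affine projection (indeed $\norm{\hat\mu-\bar\mu}\le\norm{\Pi_L(y)-\bar\mu}$), and since $\hat\mu$ lies on the relative boundary whenever $\Pi_L(y)\notin\conv(D)$, placing $\hat\mu$ at distance at least $(c/2)n^{\f{\a-1}{2}}$ from the relative boundary is equivalent to the relative-interior condition you wanted for $\Pi_L(y)$, after which your remaining steps (identical FW iterates for the projected objective, the choice of $k$, and the combination with Proposition \ref{prop:ksparse-ub}) go through as in the paper.
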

\begin{rmk}
Note that the radius of $\conv(D)$ is $\Theta(1)$, so values of $\a \le 1$ are reasonable. Thus, the result reflects an unusual fast rate phenomenon that responds to the centrality of  $\bar \mu$. One explanation for this (and Theorem \ref{thm:hazan-relint}) is that the volume of a high-dimensional polytope is generally concentrated at the boundary. 
\end{rmk}
\begin{rmk} It would be very interesting to design adaptive variants of the procedure considered in this section, using e.g. the duality gap. However, that is beyond the scope of our current work.
\end{rmk}
\section{Conclusion}
In this work, we have studied conditional gradient algorithms from the perspective that they output a convex (or linear) combination of at most $t$ atoms, where $t$ is the number of iterations. We establish that the set of outputs of any such procedure must has metric entropy bounded in terms of the number of iterations. If a convergence guarantee holds, this places restrictions on the entropy of the domain; we use this fact to derive domain-specific and sharp lower bounds in numerous settings of interest, as well as in canonical random domains with high probability. As a secondary application, we show that a dimension-free convergence rate which improves on the general $O(1/\eps)$ bound can be used to establish fast rates in statistical estimation. 

\section{Acknowledgments}
{The author is generously supported by the MIT Jerry A. Hausman Graduate Dissertation Fellowship}

\clearpage 
\bibliographystyle{plainnat}
\bibliography{conditional_gradient}

\clearpage

\appendix

\section{Glossary of Notation\label{sec:nota}}

\begin{tabular}{|cp{0.6\textwidth}|} \hline
  $f \gtrsim g$ & inequality up to a universal constant \\
  $f(n) \gg g(n)$ (or $g(n) = o(f(n)))$ & $f(n)/g(n) \uparrow \infty$ as $n \uparrow \infty$ \\
  $f(n) = O(g(n))$ (or $g(n) = \Omega(f(n))$) & asymptotic inequality up to a universal constant  \\
  $f(n) = \tilde O(g(n))$ (or $g(n) = \tilde \Omega (f(n))$) & asymptotic inequality up to logarithmic factors \\ 
  $C, C', C''$ & large universal constant (value may change across displays) \\ \hline \hline
  $S(T,\eps)$ & sparse programming complexity \\
  $c(T,\eps)$ & compressibility \\
  $\conv_k(T)$ & $k$-convex hull \\
  $\conv(T)$ & convex hull \\
  $\bb{A}(k)$ & set of $k$-sparse algorithms (functions)\\
  \hline \hline
  $B_p(d)$ & unit ball in $\ell^p(d)$ \\
  $B_p(m,n)$ & Schatten $\ell^p$ ball in $\bb{R}^{m \times n}$ \\
  $\norm{-}_F$ & Frobenius norm \\
  $\norm{-}_{S^p}$ & Schatten $\ell^p$ norm \\
  $\Delta(d)$ & Probability simplex in $\bb{R}^d$. \\ \hline

\end{tabular}\\

\section{Section 2 Proofs\label{sec:lb}}
\subsection{Proof of Lemma \ref{lem:khull-informal}}
We claimed that if $T$ is a subset of the unit ball then
\[\log N(\conv_k(T),\eps) \le k(\log N(T,\eps/2) + \log(6/\eps)).\]
\begin{proof}
If $x \in \conv_k(T)$ then we may write
\[x = \sum_{i=1}^k \lambda_i t_i\] for $t_1, t_2, \ldots, t_k \in T$ and $\lambda \in \Delta(k)$ (the $k$-ary probability simplex). 

Now, let $S$ be a minimal covering of $T$ at resolution $\eps/2$ (in $H$ norm), and let $D$ be a minimal covering of $\Delta(k)$ at resolution $\eps/2$ (in $\ell^1(d)$ norm). We may assume that $S$ is a subset of the unit ball in $H$ since the projection onto a closed, convex set cannot increase the distance to any point in that set \citep[Lemma 3.1]{bubeck2015convex}. For each index $i$, let $s_i$ denote an element of $S$ such that $\norm{s_i-t_i} \le \eps/2$, and let $\lambda'$ denote an element of $D$ such that $\norm{\lambda - \lambda'}_1 \le \eps/2$. We have 
\begin{equation*}
  \norm{\sum_{i=1}^k \lambda_i t_i - \sum_{i=1}^k \lambda_i s_i} \le \sum_{i=1}^k \lambda_i \norm{t_i-s_i} \le \eps/2.
\end{equation*}
Similarly, by Lemma \ref{lem:l1-weights-lipschitz} (proved independently in Appendix \ref{proof:l1-weights-lipschitz} below), we have 
\begin{equation*}
  \norm{\sum_{i=1}^k \lambda_i s_i - \sum_{i=1}^k \lambda'_i s_i} \le \norm{\lambda-\lambda'}_1 \le \eps/2.
\end{equation*}
By the triangle inequality, we obtain that 
\[\norm{\sum_{i=1}^k \lambda_i t_i - \sum_{i=1}^k \lambda'_i s_i} \le \eps.\] Note that right-hand sum can take on at most $(\#S)^k(\#D)$ values. We have $\#S \le N(T,\eps/2)$; by Lemma \ref{lem:l1-simp-cover} (proved independently in Appendix \ref{proof:l1-simp-cover} below), $(\#D) \le (6/\eps)^k$. We may therefore estimate
\begin{align*}
  N(\conv_k(T),\eps) 
  &\le (\#S)^k(\#D) \\
  &\le N(T,\eps/2)^k(6/\eps)^k.
\end{align*}
Taking logarithms gives the desired result.
\end{proof}
\section{Section 3 Proofs}
\subsection{Proof of Lemma \ref{lem:l1-weights-lipschitz}\label{proof:l1-weights-lipschitz}}
 We claimed that for $u_i \in \bb{R}^d$ with $\norm{u_i} \le 1$ for all $1 \le i \le t$, and $\lambda, \lambda' \in \bb{R}^t$, then 
 \[\norm{\sum_{i=1}^t (\lambda_i - \lambda'_i)u_i} \le \norm{\lambda-\lambda'}_1. 
 \]
 \begin{proof}
We can verify that
\begin{align*}
  \norm{\sum_{i=1}^t (\lambda_i - \lambda'_i)u_i} 
  &\le 
  \sum_{i=1}^t |\lambda_i - \lambda'_i|\norm{u_i} \\
  &\le  \sum_{i=1}^t |\lambda_i - \lambda'_i| = \norm{\lambda-\lambda'}_1,
\end{align*}
as claimed. 
\end{proof}
\subsection{Proof of Lemma \ref{lem:l1-simp-cover}\label{proof:l1-simp-cover}}
We claimed that for all $\eps \le 1$, the probability simplex $\Delta(t) \subset \bb{R}^t$ satisfies
\[N(\Delta(t),\eps, \ell^1(t)) \le \left(\frac{3}{\eps}\right)^{t}.\]
\begin{proof} To verify this, note that $\Delta(t)$ is contained within the unit $\ell^1(d)$ ball $B_1(t)$. Thus, by Lemma \ref{lem:homothetic-vol}, we have 
\[N(\Delta(t),\eps, \ell^1(t)) \le N(B_1(t),\eps, \ell^1(t)) \le \left(1 + \frac{2}{\eps}\right)^{t} \le\left(\frac{3}{\eps}\right)^{t},\] where the final inequality holds for $\eps \le 1$. 
\end{proof}
\begin{rmk} With a more careful application of the volume argument, one can show the upper bound $(2/\eps)^{t-1}$. However, the two results are equivalent for our purposes.
\end{rmk}

\subsection{Proof of Theorem \ref{thm:volume}}
We claimed that 
\[S(P,\eps^2/4) \ge d \left( \frac{ \log(\varv(P)^{\f 1 d}) + \log(1/\eps)}{3 + \log n + \log(1/\eps)} \right).\]
\begin{proof} By the standard volume argument (see Lemma \ref{lem:vol}), 
  \[N(P,\eps) \ge \frac{\vol_d(P)}{\vol_d(\eps B_2(d))} = \frac{\vol_d(P)}{\eps^d\vol_d(B_2(d))}.\]
  Taking logs gives 
  \[\log N(P,\eps) \ge d\log\left(\frac{\vol_d(P)^{\f 1 d}}{\eps\vol_d(B_2(d))^{\f 1 d}}\right).\]
  Plugging this into Proposition \eqref{prop:basic-polytope} gives 
  \[S(\eps^2/4) \ge d \left( \frac{ \log(\vol_d(P)^{\f 1 d}/\vol_d(B_2(d))^{\f 1 d}) + \log(1/\eps)}{3 + \log n + \log(1/\eps)} \right).\]
  Upon plugging in the definition of $\varv(P)$, the proof is complete. 
\end{proof}

\subsection{Sketch of Lemma \ref{lem:pivovarov2007volume}\label{sec:pivovarov2007volume-proof}}
Here we sketch a proof of the inequality \eqref{eq:pivovarov2007volume}, proved by \citet{pivovarov2007volume}, that if $p = (p_1, p_2, \ldots p_m)$ are $m$ independent points distributed according to the uniform measure on the Euclidean sphere in $\bb{R}^d$ and $P = \conv(p)$ denotes their convex hull, then 
\[\bb{P}(P \not\supset rB_2(d)) \le 2\exp(d\log(m) - n\mu(r))\]
\begin{proof}
Consider the $\binom{m}{d}$ potential facets of $P$, each determined by a subset of $d$ points (since no $d+1$ points are coplanar, almost surely). If $P \not\supset rB_2(d)$, then at least one of these subsets must form a facet that intersects $rB_2(d)$. We may thus bound the probability that a particular subset forms a facet that intersects $rB_2(d)$, and then apply a union bound over all subsets.
  
To this end, suppose that for a given subset of indices $A \subset [m]$ of size $d$, the corresponding points $p_A$ form a facet that intersects $rB_2(p)$. Then both of the half-spaces determined by the affine hull of $p_A$ have probability at most $1-\mu(r)$ with respect to the uniform distribution on the sphere. Moreover, since $p_A$ forms a facet, each of the remaining $n-d$ points must independently lie in the same one of these two half-spaces. This event can therefore have probability at most $2(1-\mu(r))^{m-d}$, conditional upon any realization of $p_A$. 
By a union bound over all possible subsets $A$, we get 
  \[\bb{P}\left(rB_2(p) \not\subset P \right) \le 2\binom{m}{d}(1-\mu(r))^{m-d} \le 2\exp(d\log(m) - n\mu(r)),\] where the latter inequality follows from standard estimates such as Stirling's approximation. This completes our sketch of Lemma \ref{lem:pivovarov2007volume}.
\end{proof}

\begin{rmk} It is quite clear that this proof can be relaxed to arbitrary independent distributions of the points $p_i$, where $\mu(r)$ is replaced by any uniform upper bound on the probability that each point lies on either side of any hyperplane that intersects $rB_2(d)$. For example, in the Gaussian case, this quantity is precisely $1 - \Phi(r)$, where $\Phi$ denotes the Gaussian CDF.
\end{rmk}
\subsection{Proof of Lemma \ref{lem:bigcap}}
The claimed result was that for all $d \ge 6$
\[\mu\left(\frac{1}{2\sqrt{d}}\right) > \frac{1}{20}.\]
This follows from the following lower bound on the volume of a spherical cap of height $\ep/(1+t)$, which is particularly useful in the regime where $\ep = \ep(d) \downarrow 0$, and $t$ is a small constant, e.g.~$1$. Its proof uses the well-known coupling between the high dimensional spherical and Gaussian distributions.

\begin{proposition}\label{prop:big-cap-lb} Let $\mu(r) = \bb{P}(\bk{v}{e_1} \ge r)$ for $v$ uniformly distributed on $S^{d-1}$ and $e_1$ a canonical basis vector in $\bb{R}^d$. Let $\Phi(-)$ denote the Gaussian CDF. For all $\eps \in (0,1)$ and $t > 0$, it holds that
 \[\mu((1+t)^{-1}\ep) \ge \Phi(\ep\sqrt d) - e^{\frac{-t^2d}{2}}\]
\end{proposition}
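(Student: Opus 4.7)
The plan is to exploit the well-known coupling between the uniform distribution on $S^{d-1}$ and the standard Gaussian on $\bb{R}^d$: if $g \sim N(0, I_d)$ then $v := g/\norm{g}$ is uniform on the sphere, and in particular $\bk{v}{e_1} = g_1/\norm{g}$. The key observation is that on the event where $\norm{g}$ is not much larger than its typical value $\sqrt{d}$, a lower bound on $g_1$ translates directly into a lower bound on $g_1/\norm{g}$.

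More concretely, I would argue that the event $\{g_1/\norm{g} \ge \eps/(1+t)\}$ contains the intersection $\{g_1 \ge \eps\sqrt{d}\} \cap \{\norm{g} \le (1+t)\sqrt{d}\}$, since on this intersection
\[\frac{g_1}{\norm{g}} \ge \frac{\eps\sqrt{d}}{(1+t)\sqrt{d}} = \frac{\eps}{1+t}.\]
A standard union bound then gives
\[\mu\!\left(\tfrac{\eps}{1+t}\right) \ge \Pr{g_1 \ge \eps\sqrt{d}} - \Pr{\norm{g} > (1+t)\sqrt{d}}.\]
The first term is exactly $\Phi(\eps\sqrt{d})$ under the convention that $\Phi$ denotes the Gaussian survival function.

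The remaining task is to bound $\Pr{\norm{g} > (1+t)\sqrt{d}}$ by $e^{-t^2 d/2}$. For this I would invoke Gaussian concentration: $g \mapsto \norm{g}$ is $1$-Lipschitz on $\bb{R}^d$, and by Jensen's inequality $\Ev{\norm{g}} \le (\Ev{\norm{g}^2})^{1/2} = \sqrt{d}$. The Borell–TIS inequality then yields
\[\Pr{\norm{g} - \Ev{\norm{g}} > s} \le \exp(-s^2/2),\]
and substituting $s = t\sqrt{d}$ (so that $\Ev{\norm{g}} + s \le (1+t)\sqrt{d}$) gives the desired tail bound. Combining the two estimates completes the proof.

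No step is particularly delicate here; the only thing to get right is the direction of the convexity/concentration inequalities (ensuring $\Ev{\norm{g}} \le \sqrt{d}$ rather than $\ge$) so that the level $(1+t)\sqrt{d}$ lies above the mean and the one-sided Gaussian concentration applies cleanly.
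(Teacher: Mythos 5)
Your proof is correct and follows essentially the same route as the paper: the Gaussian coupling $v = g/\norm{g}$, the observation that $\{g_1 \ge \ep\sqrt{d}\} \cap \{\norm{g} \le (1+t)\sqrt{d}\}$ forces $\bk{v}{e_1} \ge \ep/(1+t)$, and Borell's inequality combined with $\bb{E}\norm{g} \le \sqrt{d}$ (Jensen) to bound $\bb{P}(\norm{g} > (1+t)\sqrt{d})$ by $e^{-t^2 d/2}$. Your reading of $\Phi$ as the upper-tail probability matches the paper's actual usage, so there is no gap.
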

Starting from this proposition, we may take $t = 1$, $\ep = \f{1}{\sqrt d}$, and use the lower bound $\Phi(t) \ge (t\sqrt{8\pi})^{-1} e^{-t^2/2}$ to obtain 
\[\mu\left(\frac{1}{2\sqrt{d}}\right) \ge \frac{1}{\sqrt{8\pi e}} - e^{-d/2}\]
For $d \ge 6$, this quantity is at least $1/20$.

\begin{proof}
  Let $g$ be a standard Gaussian random vector in $\bb{R}^d$. Since $g/\norm{g}$ is uniformly distributed on the sphere $S^{d-1}$, our problem is equivalent to stating a lower bound for the probability of the event
  \[E_1 = \left\{\bk{\frac{g}{\norm g}}{e_1} \ge \frac{\ep}{1+t}\right\},\] where $e_1$ is a canonical basis vector. 
  We start by considering the alternative event
  \[E_2 = \left\{\bk{\frac{g}{\sqrt d}}{e_1} \ge \ep\right\}.\]
  Since the  $\bk{g}{e_1}$ is a unit normal random variable, we have that $\bb{P}(E_2) = \Phi(\ep\sqrt{d})$. Now, we have 
  \[\bb{P}(E_1) \ge \bb{P}(E_2 \setminus \bar E_1) = \bb{P}(E_2) - \bb{P}(E_2 \cap \bar E_1),\] so it suffices to bound the probability of $E_2 \cap \bar E_1$. Indeed, rearranging the two corresponding inequalities yields
  \[\norm{g} > (1+t){\sqrt{d}}.\] Note that $(\bb{E}\norm{g})^2 \le \bb{E}\norm{g}^2 = d$ by Jensen's inequality. Since $\norm{g}$ is the supremum of a Gaussian process with pointwise variance $d$ and $\bb{E}\norm{g} \le \sqrt d$, we deduce from Borell's inequality (Theorem \ref{lem:borell}) that
  \begin{equation}\label{eq:borell-gauss-norm}\bb{P}(\norm{g} \ge \sqrt{d} + t\sqrt{d}) \le e^{\frac{-t^2d}{2}}.\end{equation} Thus,
  \[\bb{P}(E_1) \ge \bb{P}(E_2) - \bb{P}(E_2 \cap \bar E_1) \ge \Phi(\ep\sqrt{d}) - e^{\frac{-t^2d}{2}}.\] 
\end{proof}
\begin{figure}[h!]
  \centering
\includegraphics[width=4in]{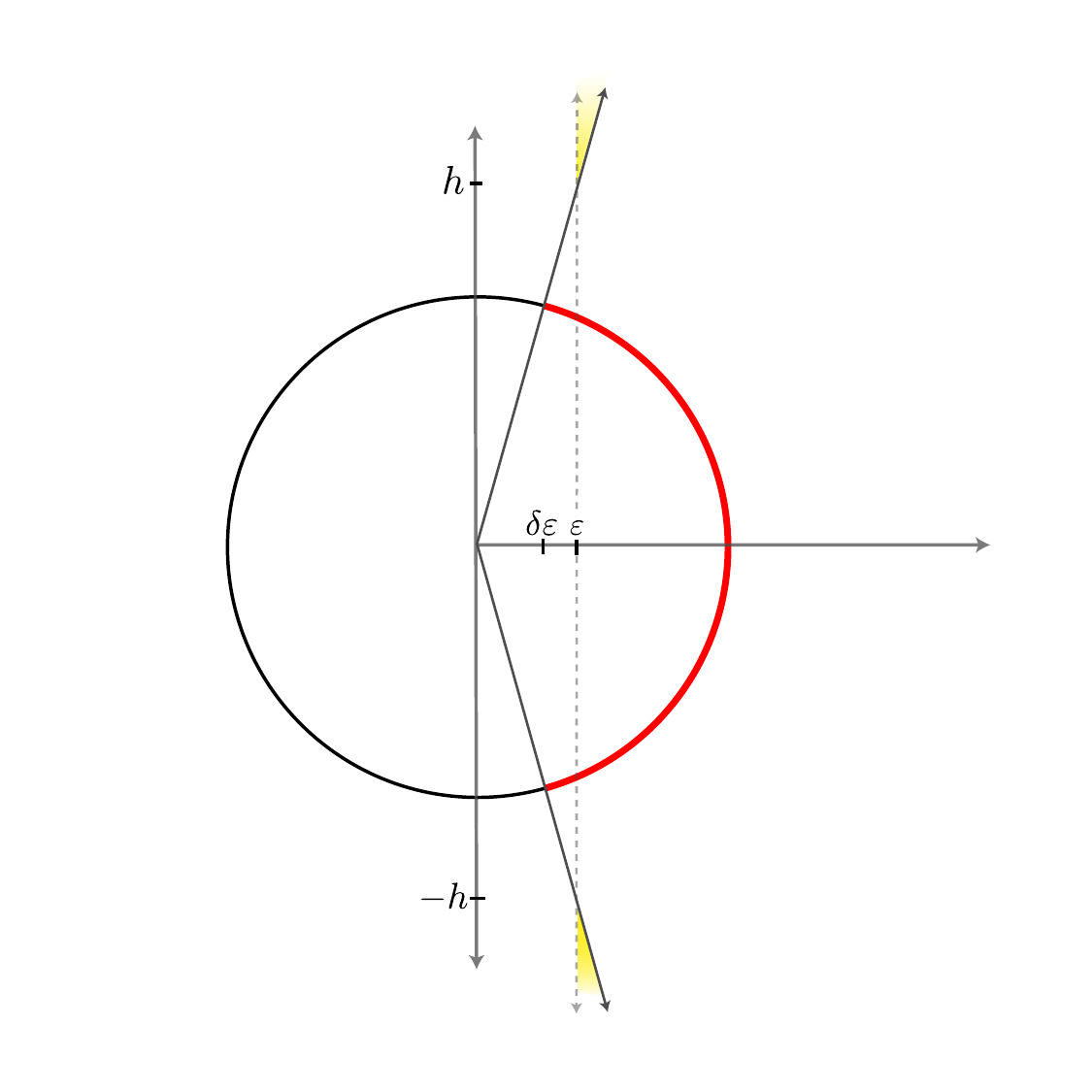}
\caption{Illustration of the proof of Proposition \ref{prop:big-cap-lb}, where $\d = (1+t)^{-1}$. Here, the bold red line depicts the spherical cap of height $\d\ep$. Relative to the Gaussian measure, the event $E_2$ is the region to the right of the dotted gray vertical line, while the event $E_1$ is the cone generated by the red spherical cap. The yellow regions correspond to the event $E_2 \cap \bar E_1$, which implies an upper deviation of size $t\sqrt{d}$ from the mean of the $1$-subgaussian random variable $\norm{g}$.}
\end{figure}
\subsection{Extension to Gaussian Polytopes\label{sec:gauss-poly}}
In this section we'll prove an analogous result to Theorem \ref{thm:random-polytope} where the vertices are independently distributed according to the standard Gaussian measure in $\bb{R}^d$. 
\begin{proposition}
  Let $g = (g_1,\,g_2,\ldots, g_m)$ be $m$ independent standard Gaussian vectors in $\bb{R}^d$, with $d \ge 6$. Let $P' = \conv(g)$ denote their convex hull. Then there exists some $C(\a), C'(\a)$, depending only on $\a$, such that if $\alpha d^\alpha \ge m \ge 10 d\log^2(d)$ for some $\a > 0$, it holds with probability $1-3e^{-d}$ that
  \[\varv(P') \ge \left(\frac{1}{C(\a)\sqrt{d}}\right)^d, \quad S(P'/\sqrt{d},\d/d) \ge \frac{d\log(1/\d)}{C'(\a)\log d + \log(1/\d)}.\]
\end{proposition}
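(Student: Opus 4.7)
The argument parallels the spherical case of Theorem \ref{thm:random-polytope}, substituting the Gaussian extension of Lemma \ref{lem:pivovarov2007volume} noted in the remark following its proof sketch and adding a diameter-normalization step, since Gaussian vertices are not confined to the unit sphere. To start, I would bound $\mathrm{diam}(P')$ using Gaussian concentration: since $\norm{g}$ is $1$-subgaussian with mean at most $\sqrt{d}$, a union bound gives $\max_i \norm{g_i} \le \sqrt{d} + \sqrt{2(d+\log m)}$ with probability at least $1-e^{-d}$. Under $m \le \alpha d^\alpha$ this is at most $K(\alpha)\sqrt{d}$ for some $K(\alpha)$ depending only on $\alpha$, so $\mathrm{diam}(P') \le D := 2K(\alpha)\sqrt{d}$ on this event.

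Next, the Gaussian analog of Lemma \ref{lem:pivovarov2007volume} (cf.~the remark after its proof sketch), with $\mu(r)$ replaced by $1-\Phi(r)$, yields
\[\bb{P}\bigl(B_2(d)\not\subset P'\bigr) \le 2\exp\bigl(d\log m - m(1-\Phi(1))\bigr).\]
Since $1-\Phi(1) > 1/7$, $m \ge 10 d\log^2(d)$, and $\log m \le \alpha \log d + \log\alpha$, the exponent is at most $-d$ once $d$ is sufficiently large, giving probability at most $2e^{-d}$. A union bound with the diameter event yields probability $\ge 1-3e^{-d}$ that both events hold. On this event, $\varv(P') \ge \varv(B_2(d)) = 1$, and hence $\varv(P'/D) \ge D^{-d} = (2K(\alpha)\sqrt{d})^{-d}$, which establishes the volume claim with $C(\alpha) = 2K(\alpha)$.

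For the complexity bound, I would apply Theorem \ref{thm:volume} to the unit-diameter polytope $P'/D$ and rescale using the identity $S(\lambda Q, \eta) = S(Q, \eta/\lambda^2)$. Taking $\eps := 2\sqrt{\delta}/D$ so that $\eps^2/4 = \delta/D^2$, we obtain
\[S(P'/\sqrt d, \delta/d) = S(P'/D, \delta/D^2) \ge d\cdot\f{\log(\varv(P'/D)^{1/d}) + \log(1/\eps)}{3 + \log m + \log(1/\eps)}.\]
Substituting $\varv(P'/D)^{1/d} \ge 1/D$ cancels the leading $\log D$ contributions in the numerator and leaves $\tfrac{1}{2}\log(1/(4\delta))$; meanwhile, the denominator is bounded above by $(\alpha+\tfrac{1}{2})\log d + \tfrac{1}{2}\log(1/(4\delta))$ plus an $\alpha$-dependent constant. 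Absorbing constants into $C'(\alpha)$ yields the stated lower bound.

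The main obstacle is purely bookkeeping: tracking three scalings of the polytope ($P'$, $P'/\sqrt d$, and $P'/D$) together with the corresponding rescalings of error tolerance, so the final bound matches the clean form in the statement. The probabilistic ingredients — Gaussian norm concentration and the Gaussian version of Pivovarov's bound — are essentially immediate from the spherical argument.
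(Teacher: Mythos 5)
Your proposal is correct and follows essentially the same route as the paper: the Gaussian analogue of Pivovarov's containment lemma at radius $r=1$, a circumradius bound of order $C(\alpha)\sqrt{d}$ from Gaussian norm concentration, a union bound, and then Theorem \ref{thm:volume} applied to the rescaled polytope together with the scaling identity $S(uA,u^2 r)=S(A,r)$. The only (immaterial) difference is that you bound $\max_i \norm{g_i}$ by a direct union bound over the $m$ subgaussian norms, whereas the paper uses Borell's inequality plus the sub-Gaussian maximal inequality; your explicit "for $d$ sufficiently large" caveat in absorbing $d\log m$ into $m(1-\Phi(1))$ is the same bookkeeping the paper glosses over.
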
 
\begin{proof}
  Let $r > 0$ be given. We use the following result, whose proof mimics Lemma \ref{lem:pivovarov2007volume}.
  \begin{lemma}[{\citet[Lemma 2.7 \& Eqn. 11]{pivovarov2007volume}}] Let $\Phi(r) = \bb{P}(g_1<r)$ denote the Gaussian CDF. Then
    \begin{equation}\label{eq:piv-gauss} \bb{P}\left( P' \not\supset rB_2(d) \right) \le 2\binom{m}{d}\Phi(r)^{m-d} \le 2\exp(n\log(m)-n(1-\Phi(r)))
    \end{equation}
  \end{lemma}
  Working forwards from \eqref{eq:piv-gauss}, 
  since $1 - \Phi(1) \ge \f{1}{10}$, 
  we obtain 
  \[\bb{P}\left( P' \not\supset B_2(d) \right) \le 2\exp\{d\log(m) - m/10\}.\]
  If $m \ge 10d\log^2(d)$ then this event holds with probability at least $2e^{-d}$. 
  
  In order to apply Theorem \eqref{thm:volume}, we also need an appropriate estimate on the radius of $P'$. 
  Note that $\max_{i \in [m]} \norm{g_i} = \max_{i \in [m]} \sup_{\sigma \in S^{d-1}} \bk{\sigma}{g_i}$ is the supremum of a unit variance Gaussian process, so by Borell's inequality (Theorem \ref{lem:borell}) we have 
  \[\bb{P}\left(\max_{i \in [m]} \norm{g_i} \ge \bb{E}\left[\max_{i \in [m]} \norm{g_i}\right] + u\right) \le e^{-u^2/2}\]
  To estimate the expectation, note by \eqref{eq:borell-gauss-norm}, we have $\bb{P}(\norm{g_i} - \sqrt{d} \ge u) \le e^{-u^2/2}$. Thus, by the sub-Gaussian maximal inequality (Lemma \ref{lem:subg-max}), we have that 
  \[\bb{E}\left[\max_{i \in [m]} \norm{g_i} - \sqrt{d}\right] \le \sqrt{2\log(m)} + \sqrt{2\pi}.\]
  Since $\log(m) \le \a\log(d) + \log(\a)$, we obtain that 
  \[\bb{P}\left(\max_{i \in [m]} \norm{g_i} \ge \sqrt{d} + \sqrt{2\a\log(d) + 2\log(\a)} + \sqrt{2\pi} + u\right) \le e^{-u^2/2}\]
  Absorbing negligible terms and simplifying, we obtain
  \[\bb{P}\left(\max_{i \in [m]} \norm{g_i} \ge C(\a)\sqrt{d}\right) \le e^{-d}\] for some sufficiently large $C(\a)$ which depends only on $\a$. By a union bound, it holds with probability $1 - 3e^{-d}$ that \[B_2(d) \supset \frac{P'}{C(\a)\sqrt d} \supset \frac{B_2(d)}{C(\a)\sqrt d} \implies \varv\left(\frac{P'}{C(\a)\sqrt d}\right)^{\f 1 d} \ge \frac{1}{C(\a)\sqrt d}.\]
  Plugging this into Theorem \ref{thm:volume} gives
\[S(P'/(C(\a)\sqrt d), \eps^2/4) \ge d\left(\frac{\log(1/\eps) - \log(C(\a)\sqrt{d})}{3 + \a(\log d + \log \a) + \log(1/\eps)}\right),\] which yields the result when we choose $\eps = \sqrt{\d/(C(\a)^2d)}$, simplify, and note that $S(u A, u^2 r) = S(A, r)$.

\end{proof}
\subsection{Proof of Theorem \ref{thm:infinite-vertex}\label{proof:infinite-vertex}}
We claimed that if $V \subset \bb{R}^d$ is a subset of the unit Euclidean ball,
then for any $\eps \le 1$ the sparse programming complexity for quadratic objectives over $\conv(V)$ satisfies
\[S(V,\epsilon^2/4) \ge \frac{\log N(\conv(V),\eps)}{4 + \log N(V,\eps/2) + \log(1/\epsilon)} .\]
\begin{proof}
  We work forwards from Lemma \ref{lem:khull-informal}, which says that 
  \[\log N(\eps, \conv_k(T)) \le k(\log N(\eps/2, T) + \log(6/\eps)).\]
  Combining this with the definition of compressibility \eqref{def:compress} gives
\begin{align*}
  \log N(\conv(T),\eps) &\le \log N(\conv_{c(\eps/2,S)}(T) + \eps B(H)/2,\eps) \\
                   &\le \log[N(\conv_{c(\eps/2,S)}(T),\eps/2)\cdot N(\eps B(H)/2,\eps/2) \\
                   &\le c(\eps/2,T)\cdot(\log N(\eps/4, T)+ \log(12/\eps) + 1)
\end{align*}
since $N(\eps B(H)/2, \eps/2) = 1 \le c(\eps/2,S)$. Rearranging and applying \eqref{eq:comp-sparse} to relate compressibility to sparse programming complexity, we obtain a lower bound on the sparse programming complexity in terms of the gap between $N(\conv(S),\eps)$ and $N(\eps/2, S)$:
\begin{equation} \label{eq:compressibility-lb}
  \frac{N(\conv(S),\eps)}{\log N(\eps/2, S)+ \log(12/\eps) + 1} \le c(\eps/2,S) \le S(F_H, D, \eps^2/4)..
\end{equation}
The proof is complete after noting that $\log(12/\eps) + 1 = \log(1/\eps) + \log(12e)$, and $\log(12e) \le 4$.
\end{proof}
\subsection{Proof of Example \ref{ex:unit-cube}}
We claimed that 
\[S(B_\infty(d),d\eps^2/4) \ge \frac{d(\log(1/\eps) - C)}{1 + d\log 2 + \log(1/\eps)}.\] 
\begin{proof} To verify the claim, note that the normalized unit $B_\infty(d)/\sqrt{d}$ cube has volume precisely $(1/\sqrt{d})^d$ and has $n = 2^d$ vertices. On the other hand, the unit ball $B_2(d)$ has volume 
\[\frac{\pi^{\f d 2}}{\Gamma(1+\textstyle\frac{d}{2})} \ge \left(\frac{\sqrt{2\pi/C}}{\sqrt d}\right)^d,\] where we have used the standard inequality $\Gamma(1 + x) \le (Cx)^{x}$. It follows that $\varv(B_\infty(d)/\sqrt{d})^{\frac 1 d} \ge 1/C'$. Combining this with Theorem \ref{thm:volume} gives 
\[S(P,\eps^2/4) \ge d \left( \frac{ \log(1/\eps) - C''}{3+d\log(2) + \log(1/\eps)} \right),\] which is precisely the claimed bound.
\end{proof}
\subsection{Proof of Example \ref{ex:nuclear}}
We claimed that whenever $m \ge e$,
  \[S(V,\epsilon^2/4) \ge \frac{mn(\log(1/\eps)-\frac 1 2 \log(m \wedge n)}{6 + m + n + 2\log(1/\eps)}\]
\begin{proof}
  Firstly, note that the standard Euclidean inner product structure in $\bb{R}^{n \times m}$ coincides with the Frobeinus inner product $\bk{A}{B}_F = \tr(A^\top B)$. Recall that we have
  \[V = \Set[uv^\top]{u \in \bb{R}^m, v \in \bb{R}^n, \norm{u}_{\bb{R}^m} = \norm{v}_{\bb{R}^n} = 1}\] We begin by estimating $\log N(V, \eps)$. For unit vectors $a,c \in \bb{R}^m$ and $b,d \in \bb{R}^n$,
  \begin{align*}
    ab^\top - cd^\top 
    &= ab^\top - cb^\top + cb^\top - cd^\top \\
    &= (a-c)b^\top + c(b-d)^\top,
  \end{align*}
and, each of these summands may be bounded as, e.g.,
\begin{align*}
  \norm{(a-c)b^\top}^2_F 
  &= \tr((a-c)b^\top b (a-c)^\top) \\
  &= \tr((a-c)^\top(a-c)) \\
  &= \norm{a-c}_{\bb{R}^m}^2,
\end{align*}
using the fact that $b^\top b = \norm{b}^2_{\bb{R}^n}=1$. Using the triangle inequality, we may conclude in this manner that 
\[\norm{ab^\top - cd^\top}_F \le \norm{a-c}_{\bb{R}^m} + \norm{b-d}_{\bb{R}^n}.\] Now, suppose  we are given a minimal covering $M$ (resp. $N$) of $S^{m-1}$ (resp. $S^{n-1}$) at resolution $\eps/2$. We may assume without loss of generality that these coverings are subsets of the unit ball, since the nearest point map onto the unit ball does not increase the distance to any point in the unit ball \citep[Lemma 3.1]{bubeck2015convex} . By the above computation, the set \[M N^{\top} = \Set[ab^\top]{a \in M, b \in N}\] is a covering of $V$ of size $(\#M)(\#N)$ at resolution $\eps$. By a volume argument (Lemma \ref{lem:homothetic-vol}) we have that $N(S^{d-1},\eps) \le (3/\eps)^d$ for all $\eps \le 1$, since $S^{d-1}$ is a subset of the unit ball. We may therefore conclude that 
\[N(V,\eps)\le (6/\eps)^{m+n}.\]

To estimate $N(B_1(m,n),\eps)$, note that vector $\vec \sigma(A)$ of singular values of $A \in \bb{R}^{m \times n}$ has at most $\mr{rank}(A) \le m \wedge n$ nonzero entries. Thus, if  $\norm{A}_F = \norm{\vec\sigma(A)}_2 \le 1/\sqrt{m \wedge n}$ then $\norm{A}_{S^1} = \norm{\vec\sigma(A)}_1 \le 1$. Appealing to the equivalence between the Frobenius and canonical Euclidean norms, we deduce that 
\[\varv(B_1(m,n)) \ge \left(\frac{1}{\sqrt{m \wedge n}}\right)^{mn}.\] By another volume argument (Lemma \ref{lem:vol}) this implies 
\[\log N(B_1(m,n),\eps) \ge mn\left(\log(1/\eps) - \log(m\wedge n)/2\right).\] Plugging these estimates into Theorem \ref{thm:infinite-vertex} gives
\[S(V,\epsilon^2/4) \ge \frac{mn\left(\log(1/\eps) - \log(m \wedge n)/2\right)}{4 + m + n + \log(6) + 2\log(1/\epsilon)} \ge \frac{mn\left(\log(1/\eps) - \log(m\wedge n)/2\right)}{6 + m + n + 2\log(1/\epsilon)},\] since $\log(6) \le 2$. Finally, taking $\eps = \sqrt{\d/m \wedge n}$ gives us
\[S(V,\d/[4(m \wedge n)]) \ge \frac{mn\log(1/\d)/2}{C + m + n + \f 1 2 \log(m \wedge n) + \f 1 2 \log(1/\d)} \ge \frac{mn\log(1/\d)}{C'(m + n + \log(1/\d))}.\] 
This completes the proof.
\end{proof}
\section{Proofs from Section 4}
\subsection{Proof of Proposition \ref{thm:local-gaussian}}
The result follows from the following, slightly more general statement.
\begin{proposition} 
  Let $\hat \mu$ be an $\eps$-approximate minimizer of the empirical risk that takes values in $G \subset F$, let $\bar\mu \in F$ be the minimizer of the true risk, and suppose that $F$ is convex. Then
  \begin{equation}
    \c E(\hat \mu) 
    \le \eps + \frac{4}{\sqrt n}\sup_{\mu \in G - \bar\mu}\left\{\bk{g}{\mu} - \norm{\mu}_2^2 \right\} 
    \le \eps + \frac 8 n \left(\sup_{\mu \in \bar \mu - G} \bk{\frac{\mu}{\norm \mu}}{g}\right)^2.
  \end{equation} 
  Moreover, under the same conditions, it holds with probability $1 - \eta$ that
  \begin{equation}
    \c E(\hat \mu) \le \eps + \frac{4}{\sqrt n}\sup_{\mu \in G_\eta} \bk{\bar \mu - \mu}{g}
  \end{equation}
  where $G_\eta \subset G$ is given by the localized set
  \[G_\eta=\Set[\mu \in G]{\sqrt n \norm{\bar\mu-\mu} \le 2\bb{E}\sup_{\mu \in G}\bk{\f{\mu}{\norm{\mu}}}{g} + \sqrt{8\log(1/\eta)}}.\]
\end{proposition}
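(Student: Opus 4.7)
The plan is to combine an exact second-order expansion of the quadratic risks $R(\mu) = \norm{\mu^* - \mu}^2$ and $R_n(\mu) = \norm{y-\mu}^2$ around $\bar\mu$ with the first-order optimality of $\bar\mu$ on the convex set $F$. Writing $\delta = \hat\mu - \bar\mu$, both Taylor expansions are exact since the Hessians equal $2I$, and their gradients at $\bar\mu$ differ only by the noise direction $\nabla R(\bar\mu) - \nabla R_n(\bar\mu) = 2g/\sqrt{n}$. Subtracting the two expansions then yields the clean identity
\[\c E(\hat\mu; F) = \bigl[R_n(\hat\mu) - R_n(\bar\mu)\bigr] + \frac{2}{\sqrt n}\bk{g}{\delta}.\]

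Two standard ingredients then close out the deterministic bound. The $\eps$-approximate ERM hypothesis $R_n(\hat\mu) - R_n(\bar\mu) \le \eps$ yields $\c E(\hat\mu;F) \le \eps + \tfrac{2}{\sqrt n}\bk{g}{\delta}$, while first-order optimality of $\bar\mu$ on $F$ implies $\bk{\nabla R(\bar\mu)}{\delta} \ge 0$, so the exact expansion also gives $\c E(\hat\mu;F) \ge \norm{\delta}^2$. Averaging these two inequalities (or using $2\c E(\hat\mu;F) \ge \c E(\hat\mu;F) + \norm{\delta}^2$) introduces the $-\norm{\delta}^2$ term on the right, and since $\hat\mu \in G$, one can replace $\delta$ by a free variable $\mu \in G - \bar\mu$ and take a supremum to obtain the first inequality (up to a universal constant factor on $\eps$). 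The second, fully unpacked inequality then follows by optimizing the concave quadratic $t \mapsto \tfrac{4t}{\sqrt n}\bk{g}{u} - t^2$ in $t = \norm{\mu} \ge 0$ for each unit direction $u = \mu/\norm{\mu}$, yielding a value of order $\tfrac{1}{n}\bk{g}{u}^2$.

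For the localized high-probability version, I chain $\norm{\delta}^2 \le \c E(\hat\mu;F) \le \eps + \tfrac{2}{\sqrt n}\bk{g}{\delta}$ with the crude bound $\bk{g}{\delta} \le \norm{\delta}\cdot M$, where $M = \sup_{\nu \in G}\bk{\nu/\norm{\nu}}{g}$. Solving the resulting quadratic inequality in $\norm{\delta}$ gives $\sqrt n \norm{\delta} \le 2M + O(\sqrt{n\eps})$. Borell's inequality (Theorem \ref{lem:borell}) concentrates $M$ around $\bb{E} M$ at the sub-Gaussian rate $\sqrt{2\log(1/\eta)}$, so with probability at least $1-\eta$ one has $\sqrt n \norm{\delta} \le 2\,\bb{E} M + \sqrt{8\log(1/\eta)}$, i.e.\ $\hat\mu \in G_\eta$. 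Substituting this containment back into $\c E(\hat\mu;F) \le \eps + \tfrac{2}{\sqrt n}\bk{g}{\delta}$ and upper-bounding $\bk{g}{\delta}$ by the supremum over $G_\eta - \bar\mu$ produces the localized inequality.

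The main technical obstacle I anticipate is constant-tracking: my averaging step produces a factor of $2$ in front of $\eps$ rather than matching $\eps$ exactly, and the localization picks up an $O(\sqrt{n\eps})$ contribution to the radius. Closing both gaps will likely require either replacing the crude averaging by Young's inequality with a carefully tuned parameter, sharpening the ERM assumption (e.g.\ to $\eps/2$-approximation), or absorbing the lower-order terms into the universal constants already implicit in the definition of $G_\eta$. Everything else is essentially bookkeeping around the exact Taylor identity above.
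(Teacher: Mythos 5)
Your deterministic bound follows essentially the same route as the paper: you use the exact quadratic identity $\c E(\hat\mu) = \left[\norm{y-\hat\mu}^2-\norm{y-\bar\mu}^2\right] + \tfrac{2}{\sqrt n}\bk{g}{\hat\mu-\bar\mu}$ together with the margin inequality $\c E(\hat\mu)\ge\norm{\hat\mu-\bar\mu}^2$ (the paper gets this from Lemma \ref{lem:quadratic-margin}, you from first-order optimality of $\bar\mu$ plus the exact expansion\textemdash same content), combine them to introduce the $-\norm{\mu}^2$ penalty, and then optimize the concave quadratic in $\norm{\mu}$ per direction to reach the $\tfrac{8}{n}\left(\sup\bk{\mu/\norm{\mu}}{g}\right)^2$ form. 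The factor $2$ you pick up in front of $\eps$ is shared by the paper's own bookkeeping and is harmless.

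The genuine gap is in the localized high-probability bound. You localize the estimator itself: from $\norm{\delta}^2\le\eps+\tfrac{2}{\sqrt n}\norm{\delta}\,M$ (with $\delta=\hat\mu-\bar\mu$) you can only conclude $\sqrt n\,\norm{\delta}\le 2M+\sqrt{n\eps}$, so after Borell's inequality your radius is $2\,\bb E M+\sqrt{8\log(1/\eta)}+O(\sqrt{n\eps})$. The stated $G_\eta$ is $\eps$-free and has explicit constants, so there is nothing to absorb the extra term into, and $\sqrt{n\eps}$ is not lower order (for constant $\eps$ it is of order $\sqrt n$ and dwarfs $\bb E M$); none of your proposed fixes (Young's inequality, an $\eps/2$-ERM assumption, constant absorption) can remove it, because your chain of inequalities genuinely ties the radius of $\hat\mu$ to the optimization error. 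The paper avoids this entirely by never localizing $\hat\mu$: it keeps the penalized bound $\c E(\hat\mu)\le\eps+\sup_{\mu\in\bar G}\left\{\tfrac{4}{\sqrt n}\bk{\tfrac{\mu}{\norm{\mu}}}{g}\norm{\mu}-\norm{\mu}^2\right\}$ over the star-shaped hull $\bar G=\mr{star}(0,\bar\mu-G)$, and observes that by star-shapedness the \emph{maximizer of this penalized supremum} must satisfy $\norm{\mu}\le\tfrac{2}{\sqrt n}\sup_{\mu\in\bar\mu-G}\bk{\tfrac{\mu}{\norm{\mu}}}{g}$\textemdash a random radius in which $\eps$ never appears, since $\eps$ sits outside the supremum. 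Borell's inequality is then applied to that radius, yielding the $\eps$-free localization of the statement. To repair your argument you must either carry the $\sqrt{n\eps}$ term in the definition of $G_\eta$ (a strictly weaker conclusion than claimed) or switch to the paper's argmax-of-the-penalized-supremum device.
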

\begin{proof}
  Let $\bar\mu$ denote the nearest point to $\mu^*$ in $F$, so 
\[\c E(\hat \mu) = \norm{\mu^*-\hat \mu}_2^2 -  \norm{\mu^*-\bar \mu}^2.\]
By convexity of $F$ and Lemma \ref{lem:quadratic-margin}, we have 
\[ \norm{\mu^* - \hat \mu}^2 -  \norm{\mu^* -\bar \mu}^2 \ge \norm{\bar \mu - \hat \mu}^2.\]
Combining these two yields
\[\c E(\hat \mu) = 2\left\{\norm{\mu^*-\hat \mu}^2 -  \norm{\mu^*-\bar \mu}^2\right\} -  \norm{\bar \mu - \hat \mu}^2.\]
Finally, since $\hat \mu$ is an $\eps$-approximate minimizer of $\norm{y-\mu}^2$ in $F$, we have
\[\norm{y-\hat \mu}^2 - \norm{y-\bar \mu}^2 \le \eps.\]
Writing the norm as an inner product and rearranging gives
\[\norm{\mu^* - \bar \mu}^2 - \norm{\mu^* - \bar \mu}^2 \le 4\bk{\bar \mu - \hat \mu}{g/\sqrt n} + \eps.\]
Thus, we have
\begin{align*}
  \c E(\hat \mu) &\le \eps + 4\bk{\bar \mu - \hat \mu}{g/\sqrt n} -  \norm{\bar \mu - \hat \mu}^2. \\
  &\le \eps + \sup_{\mu \in \bar \mu - G} \frac{4}{\sqrt n}\bk{\mu}{g} -  \norm{\mu}^2 \\
  &\le \eps + \sup_{\mu \in \bar G} \frac{4}{\sqrt n}\bk{\mu}{g} -  \norm{\mu}^2 \\ 
  &= \eps + \sup_{\mu \in \bar G} \frac{4}{\sqrt n}\bk{\f{\mu}{\norm{\mu}}}{g}\norm{\mu} -  \norm{\mu}^2
\end{align*}
where $\bar G = \mr{star}(0,\bar \mu - G) \supset \bar \mu - G$. Note that since $\bar G$ is star-shaped, the supremum must be attained at some $\mu$ satisfying  
\begin{equation}
  \norm{\mu} \le \frac{2}{\sqrt n}\bk{\f{\mu}{\norm{\mu}}}{g} \le \frac{2}{\sqrt n} \sup_{\mu \in \bar G}\bk{\f{\mu}{\norm{\mu}}}{g} =  \frac{2}{\sqrt n}\sup_{\mu \in \bar\mu - G}\bk{\f{\mu}{\norm{\mu}}}{g}\label{eq:random-radius}
\end{equation}
where the last step follows from linearity of the objective. Plugging this in, we get the first announced bound
\[\c E(\hat \mu) \le \eps + \frac{8}{n}\left(\sup_{\mu \in \bar\mu - G}\bk{\f{\mu}{\norm{\mu}}}{g}\right)^2\]
For the second bound, we put
\begin{equation}
  \rho(G) = \bb{E} \sup_{\mu \in \bar\mu- G}\bk{\f{\mu}{\norm{\mu}}}{g} = \bb{E}\left[\bk{\bar\mu}{g} + \sup_{\mu \in G}\bk{\mu}{-g}\right] = \bb{E}\sup_{\mu \in G}\bk{\f{\mu}{\norm{\mu}}}{g}
\end{equation}
where we have used symmetry of the Gaussian distribution in the final step. By Borell's inequality (Theorem \ref{lem:borell}) we have that with probability $1-\eta$,
\begin{equation}
  \sqrt{n}\norm{\mu} \le 2 \sup_{\mu \in \bar \mu - G}\bk{\f{\mu}{\norm{\mu}}}{g}  \le 2\rho(G) + \sqrt{8\log(1/\eta)}.  \label{eq:fixed-radius}
\end{equation}
Let $G_\eta \subset G$ be the subset of points $\mu$ satisfying \eqref{eq:fixed-radius}. On the event where \eqref{eq:fixed-radius} holds, we have 
\[\c E(\hat \mu) \le \eps + \f{4}{\sqrt n} \sup_{\mu \in G_\eta} \bk{\f{\mu}{\norm{\mu}}}{g}.\]
This completes the proof.
\end{proof}

\subsection{Proof of Proposition \ref{prop:ksparse-ub}}
We claimed that 
\[\c E(\hat \mu_k; F) \lesssim \eps(k) + \frac{k\log(m)}{n} + \frac{\sqrt{k\log(m)\log(1/\eta)^2}}{n}.\]
Proposition \ref{thm:local-gaussian} tells us that 
\begin{equation}\label{eq:ksparse-ub-localgaussian}
  \c E(\hat \mu) \le \eps + \frac{8}{n}\left(\sup_{\mu \in \bar\mu - G}\bk{\f{\mu}{\norm{\mu}}}{g}\right)^2
\end{equation}
Thus, it will be sufficient to state a high probability bound for the quantity 
\[\sup_{\mu \in \bar\mu - G}\bk{\f{\mu}{\norm{\mu}}}{g}.\]
Note that by Borell's inequality (Theorem \ref{lem:borell}), we must have 
\begin{equation}
  \bb{P}\left( \sup_{\mu \in \bar\mu - G}\bk{\f{\mu}{\norm{\mu}}}{g} - \bb{E}\sup_{\mu \in \bar\mu - G}\bk{\f{\mu}{\norm{\mu}}}{g} > u \right) \le e^{u^2/2}. \label{eq:ksparse-ub-tail}
\end{equation}
Moreover, we verified in the main text that 
\[\bb{E}\sup_{\mu \in  G} \bk{\frac{\mu}{\norm{\mu}}}{g} \le \sqrt{k} + \bb{E}\sup_{S \subset D \atop \#S = k} \left((1-\bb{E})\sup_{\mu \in \vspan(S)} \bk{\frac{\mu}{\norm{\mu}}}{g}\right).\]
Note that the second summand is the expectation a maximum of $\binom{m}{k}$ variables $X_i$, each of which satisfies $\bb{P}(X_i > u) \le e^{-u^2/2}$ (by another application of Borell's inequality). Therefore, by Lemma \ref{lem:subg-max}, we have 
\begin{equation}
  \bb{E}\sup_{\mu \in  G} \bk{\frac{\mu}{\norm{\mu}}}{g} \le \sqrt k + \sqrt{2\log{\binom{m}{k}}} + \sqrt{2\pi} \le \sqrt{3k + 6(\pi+k\log m)},
\end{equation} using the standard bounds $\binom m k \le m^k$ and $\sqrt{a} + \sqrt{b} + \sqrt{c} \le \sqrt{3(a + b + c)}$. Combining this with \eqref{eq:ksparse-ub-tail} and inverting the probability bound gives us that with probability at least $1-\eta$ we have  
\[\sup_{\mu \in \bar\mu - G}\bk{\f{\mu}{\norm{\mu}}}{g} \le \sqrt{3k + 6(\pi+k\log m)} + \sqrt{2\log(1/\eta)}.\] Finally, plugging this into \eqref{eq:ksparse-ub-localgaussian}, expanding the square, and absorbing absolute constants gives us the claimed result so long as $m \ge e$. 

\subsection{Proof of Proposition \ref{cor:fast-relint}}
We claimed that if $\bar\mu$ belongs to the $cn^{\f{\a-1}{2}}$ relative interior of $\conv(D)$ for some $\f 1 2 < \a \le 1$, and $\log(m) \vee \log(1/\eta)^2 \le C\sqrt n$ for a sufficiently large universal constant $C$, then for $k = 32\a n^{{1-\a}}\log(n)/c$ and with probability $1-2\eta$
  \[\c E(\hat \mu_k) \lesssim \frac{\mr{polylog}(m,n,1/\eta)}{n^\a}.\]
  \begin{proof}
We begin by verifying that the empirical risk minimizer belongs to the $(c/2)n^{\frac{\a-1}{2}}$-relative interior with high probability. Indeed, let $\hat\mu$ denote the empirical risk minimizer, i.e. the minimizer of $x \mapsto \norm{x-y}^2$ over $\conv(D)$. We have that by Proposition \ref{thm:local-gaussian} with $G = F = \conv(D)$ that
\[\c E(\hat \mu) 
\le 0 + \frac{4}{\sqrt n}\sup_{\mu \in F - \bar\mu}\left\{\bk{g}{\mu} - \norm{\mu}_2^2 \right\}
\le \frac{4}{\sqrt n}\sup_{\mu \in F - \bar\mu}\bk{g}{\mu}.\]
Moreover, we have that
\[\bb{E}\sup_{\mu \in F - \bar\mu}\bk{g}{\mu} = \bb{E}\sup_{\mu \in F}\bk{g}{\mu} - \bb{E}\bk{g}{\bar\mu} = \bb{E}\sup_{\mu \in D}\bk{g}{\mu}\] by symmetry of the Gaussian distribution and the fact that a linear function over $\conv(D)$ will be maximized at some point in $D$. By the sub-Gaussian maximal inequality, this is at most $\sqrt{C\log(m)}$, and combining this with Borell's inequality (Theorem \ref{lem:borell}) we obtain
\[\c E(\hat \mu) = \norm{\hat\mu - \mu^*}^2 - \norm{\bar\mu - \mu^*}^2 \le \sqrt{\frac{C\log m}{n}} + \frac{2\ln(1/\eta)}{\sqrt n}.\]
Now, on this event, since $\hat\mu$ minimizes $x \mapsto \norm{y-x}^2$ over the convex set $F$, we have by Lemma \ref{lem:quadratic-margin} that
\[\norm{\hat\mu - \bar\mu}^2 \le \sqrt{\frac{C\log m}{n}}+\frac{2\ln(1/\eta)}{\sqrt n}.\] 
Thus, on this event we have $\norm{\hat\mu - \bar\mu} \le C'(\log{m}/n)^{\frac{1}{4}} + C'({\log(1/\eta)^2}/n)^{\frac 1 4}$, since $\sqrt{a+b} \le \sqrt{a} + \sqrt{b}$. For the range of $\log{m} \vee \log(1/\eta)^2 \le \sqrt{n}/C''$ we consider, this is upper bounded by $cn^{\frac{\a-1}{2}}/2$ and hence the distance to the boundary is must be at least $cn^{\frac{\a-1}{2}}/2$. Plugging this value into Theorem \ref{thm:hazan-relint} gives us 
\[\eps(k) \le \exp\left\{ \frac{-ck}{32n^{1-\a}}\right\} \le \frac{1}{n^\a},\] when we choose $k = 32\a n^{1-\a}\log(n)/c$.
Combining this with Proposition \ref{prop:ksparse-ub} and a union bound gives us that with probability $1-2\eta$
\[\c E(\hat \mu_k) \le \frac{1}{n^\a} + \frac{C'''\log(n)}{n^\a} + \frac{\sqrt{2C'''\log(n)\log(2/\eta)}}{n^{\f 1 2 + \f \a 2}}\]
which reduces to the claimed bound. 
  \end{proof}
\section{Technical Lemmas}
\subsection{Covering Lemmas}
We make heavy use of the standard volume argument, which is stated for convenience below.
\begin{lemma}[{\citet[Theorem 4.1.13]{artstein-avidan_asymptotic_2015}}]\label{lem:vol}
  Let $L$ be a convex subset of $\bb{R}^d$ and let $\norm{-}_K$ denote a norm in $\bb{R}^d$ with unit ball $K$. Then 
  \begin{equation}
    \frac{\vol_d(L)}{\vol_d(\eps K)} \le N(L,\eps,\norm{-}_K) \le 2^d\frac{\vol_d(L+ \frac{\eps}{2}K)}{\vol_d(\eps K)}
  \end{equation}
\end{lemma}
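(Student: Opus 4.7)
The plan is to establish the two inequalities separately, using the standard packing-covering duality from convex geometry.

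For the lower bound, the approach is elementary. If $N(L,\eps,\|\cdot\|_K)$ is the minimal covering number, then by definition there exist centers $y_1,\ldots,y_N \in L$ such that $L \subset \bigcup_{i=1}^N (y_i + \eps K)$. Taking Lebesgue measure on both sides and applying subadditivity yields $\vol_d(L) \le N \cdot \vol_d(\eps K)$, which rearranges to the claim.

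For the upper bound, I would use a packing argument. First, construct a maximal $\eps$-separated subset $P = \{x_1,\ldots,x_M\} \subset L$, meaning $\|x_i - x_j\|_K \ge \eps$ for $i \neq j$, with $M$ maximal. The first key observation is that \emph{maximality forces $P$ to be an $\eps$-covering of $L$}: otherwise some $y \in L$ would satisfy $\|y - x_i\|_K > \eps$ for all $i$, and we could adjoin $y$ to $P$ without destroying separation, contradicting maximality. Hence $N(L,\eps,\|\cdot\|_K) \le M$.

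The second key observation is that the translates $\{x_i + (\eps/2)K\}_{i=1}^M$ are \emph{pairwise disjoint}, since any common point would force $\|x_i - x_j\|_K < \eps$ by the triangle inequality. Moreover each such translate lies in $L + (\eps/2)K$, since $x_i \in L$. Comparing volumes gives
\[ M \cdot \vol_d\!\left(\tfrac{\eps}{2} K\right) \le \vol_d\!\left(L + \tfrac{\eps}{2} K\right), \]
and rewriting $\vol_d(\eps K) = 2^d \vol_d((\eps/2)K)$ converts this into the claimed upper bound $N \le M \le 2^d \vol_d(L + (\eps/2) K)/\vol_d(\eps K)$.

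There is no substantive obstacle here; the only two points requiring care are the packing-implies-covering step (where maximality is essential) and verifying that the half-radius balls around packing points are disjoint. Both follow cleanly from the triangle inequality for $\|\cdot\|_K$ together with the definition of a maximal $\eps$-separated set.
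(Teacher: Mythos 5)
Your proof is correct and is exactly the standard volumetric/packing argument behind the cited result (the paper itself gives no proof, deferring to \citet[Theorem 4.1.13]{artstein-avidan_asymptotic_2015}). One hair to note: with separation exactly $\eps$ the closed half-radius translates $x_i+\tfrac{\eps}{2}K$ can touch, so they need not be strictly disjoint; either take the packing to be $\eps$-separated with strict inequality or observe that the overlaps lie in boundaries of convex bodies and hence have Lebesgue measure zero, so the volume comparison is unaffected.
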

We also use the following result, which is an easy corollary of Lemma \ref{lem:vol}.
\begin{lemma}[{\citet[Corollary 4.1.15]{artstein-avidan_asymptotic_2015}}]\label{lem:homothetic-vol}
  Let $K \subset \bb{R}^d$ be centrally symmetric convex set, and let $\norm{-}_K$ denote the norm in $\bb{R}^d$ with unit ball $K$. Then 
  \begin{equation}
    \left(\frac{1}{\eps}\right)^d \le N(K,\eps,\norm{-}_K) \le \left(1 + \frac{1}{\eps}\right)^d
  \end{equation}
\end{lemma}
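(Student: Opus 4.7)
The plan is to derive both inequalities from the preceding Lemma \ref{lem:vol} by taking $L = K$ and exploiting two elementary properties of a centrally symmetric convex body: the scaling identity $\vol_d(\lambda K) = \lambda^d \vol_d(K)$, and the Minkowski sum identity $K + \lambda K = (1+\lambda)K$ for $\lambda \ge 0$. The latter holds because $0 \in K$ (guaranteed by central symmetry) together with convexity of $K$: the inclusion $(1+\lambda)K \subseteq K + \lambda K$ is obvious, and the reverse inclusion follows from writing $x + \lambda y = (1+\lambda)\bigl(\tfrac{1}{1+\lambda}x + \tfrac{\lambda}{1+\lambda}y\bigr) \in (1+\lambda)K$.

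For the lower bound, applying the left-hand inequality of Lemma \ref{lem:vol} with $L = K$ and the scaling identity to simplify the denominator immediately yields
\[N(K,\eps,\norm{-}_K) \,\ge\, \frac{\vol_d(K)}{\vol_d(\eps K)} \,=\, \frac{1}{\eps^d}.\]

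For the upper bound, applying the right-hand inequality of Lemma \ref{lem:vol} with $L = K$ produces $2^d \cdot \vol_d(K+(\eps/2)K)/\vol_d(\eps K)$. The Minkowski sum identity collapses the numerator to $(1+\eps/2)^d\,\vol_d(K)$, and after cancellation the ratio simplifies to $((2+\eps)/\eps)^d = (1 + 2/\eps)^d$.

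The main obstacle is that this direct chain delivers the looser constant $(1 + 2/\eps)^d$ rather than the stated $(1 + 1/\eps)^d$; the discrepancy is the factor $2^d$ baked into the right-hand side of Lemma \ref{lem:vol}. To recover the stated bound I would bypass Lemma \ref{lem:vol} in favor of a direct packing argument: let $N \subset K$ be a maximal $\eps$-separated set, so that $N$ is automatically an $\eps$-net by maximality, and observe that the translates $\{x + (\eps/2)K\}_{x \in N}$ are pairwise disjoint up to measure zero and contained in $(1+\eps/2)K$, yielding $|N|\,(\eps/2)^d \vol_d(K) \le (1+\eps/2)^d \vol_d(K)$. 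Matching the stated constant $(1 + 1/\eps)^d$ exactly (rather than the $(1 + 2/\eps)^d$ this argument naturally produces) is the delicate step; I suspect this requires either a sharper boundary-fitting argument exploiting symmetry or reflects a convention difference in the citation, and careful bookkeeping of the two would complete the proof.
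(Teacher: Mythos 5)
Your derivation is the right one, and the paper supplies no proof of its own here (the lemma is simply quoted from the reference), so the relevant comparison is with the standard volumetric argument, which is exactly what you carried out: the lower bound is complete, and both of your routes to the upper bound (through Lemma \ref{lem:vol}, or directly via a maximal $\eps$-separated set) are correct and are in fact the same estimate, since $2^d\,\vol_d(K+\tfrac{\eps}{2}K)/\vol_d(\eps K)=\vol_d(K+\tfrac{\eps}{2}K)/\vol_d(\tfrac{\eps}{2}K)$. The one point worth making explicit in the packing step is that disjointness of the translates $x+\tfrac{\eps}{2}K$ over an $\eps$-separated set uses the triangle inequality for $\norm{-}_K$, i.e.~that $K$ being centrally symmetric and convex makes $\norm{-}_K$ a genuine norm.

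The obstruction you ran into at the constant is not a defect of your argument but of the statement as printed: do not look for a ``sharper boundary-fitting argument,'' because the bound $(1+1/\eps)^d$ is false in general for coverings with arbitrary centers. For $K=B_2(2)$, Kershner's covering-density theorem gives that the number of $\eps$-balls needed to cover the unit disk is asymptotically $(2\pi/\sqrt{27})\,\eps^{-2}\approx 1.21\,\eps^{-2}$, which exceeds $(1+1/\eps)^2=(1+\eps)^2\eps^{-2}$ once $\eps$ is small. So your suspicion of a quotation/typo issue is the correct resolution: the bound that the volumetric method yields, and the only form the paper ever uses downstream, is $(1+2/\eps)^d$ --- see the proof of Lemma \ref{lem:l1-simp-cover}, which invokes this lemma precisely as $N(B_1(t),\eps,\ell^1(t))\le(1+2/\eps)^t\le(3/\eps)^t$, and the nuclear-norm example, which only needs $(3/\eps)^d$ for $\eps\le 1$. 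Recording the lemma with $(1+2/\eps)^d$ (equivalently $(3/\eps)^d$ for $\eps\le1$) makes your proof complete and leaves every application in the paper intact.
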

\subsection{Probability Lemmas}
The following concentration result for Gaussian suprema, which is a consequence of the Gaussian isoperimetric inequality, is heavily used throughout the paper. 
\begin{theorem}[Borell's Inequality, {\citet[Theorem 2.1.1]{adler2007random}}]\label{lem:borell}
  Let $(g_t)_{t \in T}$ be a separable and almost surely bounded Gaussian process, and put $\sigma^2 = \sup_{t \in T}\bb{E}g_t^2$. Then,
  \[\bb{P}\left( \sup_{t \in T} g_t > \bb{E} \sup_{t \in T} g_t + u \right) \le e^{\frac{-u^2}{2\sigma^2}}.\]
\end{theorem}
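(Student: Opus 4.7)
The plan is to deduce Borell's inequality from the Gaussian isoperimetric inequality of Borell and Sudakov--Tsirelson, via the Gaussian concentration inequality for Lipschitz functions. The first reduction uses separability: I fix a countable dense subset $T_0 \subset T$, so that $\sup_{t \in T} g_t = \sup_{t \in T_0} g_t$ almost surely, and write $T_0 = \bigcup_N T_N$ as an increasing union of finite subsets. It then suffices to prove the bound for each finite $F_N := \sup_{t \in T_N} g_t$ with constants independent of $N$ and pass to the limit.

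For fixed $T_N = \{t_1, \ldots, t_N\}$, I would invoke the standard representation of a finite-dimensional centered Gaussian vector: there exist vectors $a_1, \ldots, a_N \in \bb{R}^N$ with $\norm{a_i}^2 = \bb{E} g_{t_i}^2 \le \sigma^2$ such that $(g_{t_1}, \ldots, g_{t_N})$ is equal in law to $(\bk{a_1}{\xi}, \ldots, \bk{a_N}{\xi})$ for $\xi$ a standard Gaussian on $\bb{R}^N$. The corresponding function $\Psi(\xi) := \max_i \bk{a_i}{\xi}$ is then $\sigma$-Lipschitz in the Euclidean norm, since Cauchy--Schwarz gives $|\Psi(\xi) - \Psi(\xi')| \le \max_i \norm{a_i}\cdot\norm{\xi - \xi'} \le \sigma\norm{\xi - \xi'}$ for any $\xi, \xi'$.

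The main step is the Gaussian concentration inequality for Lipschitz functions: any $L$-Lipschitz $\Psi : \bb{R}^N \to \bb{R}$ and standard Gaussian $\xi$ satisfy $\bb{P}(\Psi(\xi) > \bb{E}\Psi(\xi) + u) \le \exp(-u^2/(2L^2))$. I would derive this by applying the Gaussian isoperimetric inequality to the sub-level set $A := \{\Psi \le \bb{E}\Psi\}$, which has measure at least $1/2$: the Lipschitz property implies the $(u/L)$-enlargement $A_{u/L}$ is contained in $\{\Psi \le \bb{E}\Psi + u\}$, and Gaussian isoperimetry then lower-bounds the measure of $A_{u/L}$ by the measure of the corresponding half-space enlargement, yielding $\bb{P}(\Psi \le \bb{E}\Psi + u) \ge 1 - e^{-u^2/(2L^2)}$. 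Taking $L = \sigma$ gives the desired bound for $F_N$ uniformly in $N$.

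Finally, I would pass to the limit: since $F_N \uparrow \sup_{t \in T_0} g_t$ pointwise by construction, and the uniform finite-$N$ tail bound together with the almost-sure boundedness hypothesis implies uniform integrability of $F_N - \bb{E} F_N$, one obtains $\bb{E} F_N \to \bb{E}\sup_{t \in T} g_t$ and the tail inequality transfers to the limit. The main obstacle is the Gaussian isoperimetric inequality itself, whose proof (via symmetrization, Ehrhard's inequality, or a semigroup argument) is deep but can be cited as a black box; once that is in hand, the remainder is a short Lipschitz calculation plus a routine monotone-convergence argument.
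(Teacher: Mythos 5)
The paper itself does not prove this statement---it is quoted verbatim from Adler and Taylor---so your attempt is being measured against the standard literature proof rather than anything in the text. Your reduction to finite subsets, the representation $g_{t_i}=\bk{a_i}{\xi}$ with $\norm{a_i}\le\sigma$, the observation that $\Psi(\xi)=\max_i\bk{a_i}{\xi}$ is $\sigma$-Lipschitz, and the limiting argument (modulo the standard point that a.s.\ finiteness of the supremum must first be upgraded to finiteness of its expectation) are all fine.

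The genuine gap is in your ``main step.'' Applying the Gaussian isoperimetric inequality to $A=\{\Psi\le\bb{E}\Psi\}$ requires $\gamma(A)\ge 1/2$, i.e.\ that the median of $\Psi$ is at most its mean, and you assert this without justification. It is false for general Lipschitz functions: for the $1$-Lipschitz function $\Psi(\xi)=\min(\xi_1,0)$ one has $\bb{E}\Psi=-1/\sqrt{2\pi}$ while the median is $0$, so $\gamma(\{\Psi\le\bb{E}\Psi\})=\Phi(-1/\sqrt{2\pi})<1/2$. What the sublevel-set argument gives directly is concentration around the \emph{median}; upgrading to the mean while keeping the same constant $e^{-u^2/(2\sigma^2)}$ is precisely the nontrivial content of the Borell--TIS inequality, and the crude fix of shifting by $|\bb{E}\Psi-\mathrm{med}\,\Psi|\le\sigma\sqrt{\pi/2}$ degrades the exponent, so it does not recover the stated bound. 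To close the gap you must either (i) prove that for your specific $\Psi$ (a maximum of centered linear functionals, hence convex) the median does not exceed the mean---this is true, but it is a theorem in its own right (it follows, e.g., from Ehrhard's inequality, which makes $c\mapsto\Phi^{-1}(\gamma(\{\Psi\le c\}))$ concave and hence $\Psi(\xi)$ equal in law to a convex increasing function of a standard Gaussian, after which Jensen gives $\mathrm{med}\le\bb{E}$), not a consequence of Lipschitzness; or (ii) target the mean directly, e.g.\ via the Gaussian logarithmic Sobolev inequality and Herbst's argument, which yield $\bb{E}e^{\lambda(\Psi-\bb{E}\Psi)}\le e^{\lambda^2\sigma^2/2}$ and hence the stated tail by Chernoff, or via the original Cirel'son--Ibragimov--Sudakov stochastic-calculus proof. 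As written, your argument establishes the median version of the inequality, not the theorem as stated.
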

In particular, noting that if $g$ is a standard Gaussian in $\bb{R^d}$ and $T$ a subset of the unit ball, then $t \mapsto \bk{g}{t}$ is a separable and a.s.~bounded Gaussian process indexed by $t$ and $\bb{E}\bk{g}{t}^2 = \norm{t}^2$, we have the following immediate corollary.
\begin{lemma}\label{lem:borell-bk} Let $g$ be a standard Gaussian in $\bb{R^d}$ and let $T$ be a subset of the unit ball in $\bb{R}^d$. Then 
  \[\bb{P}\left( \sup_{t \in T} \bk{g}{t} > \bb{E} \sup_{t \in T} \bk{g}{t} + u \right) \le e^{\frac{-u^2}{2}}.\]
\end{lemma}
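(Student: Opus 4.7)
The statement is an immediate specialization of Borell's inequality (Theorem \ref{lem:borell}) to the canonical linear Gaussian process $X_t = \bk{g}{t}$ indexed by $t \in T$, so the plan is simply to verify that the three hypotheses of that theorem are satisfied and that the variance parameter $\sigma^2$ is bounded by $1$.

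First I would set $X_t = \bk{g}{t}$ and note that $(X_t)_{t \in T}$ is a centered Gaussian process since $g$ is centered Gaussian and $t \mapsto \bk{g}{t}$ is linear in $g$. To verify separability, I would observe that by continuity of $t \mapsto \bk{g}{t}$ (for each fixed $g$) and the fact that $T \subset \bb{R}^d$ is a subset of a separable metric space, we may pass to a countable dense subset $T_0 \subset T$ without changing $\sup_{t \in T} X_t$ almost surely. For almost sure boundedness, since $T$ lies in the unit ball the Cauchy--Schwarz inequality yields $\sup_{t \in T} \bk{g}{t} \le \norm{g}$, which is finite almost surely.

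Next I would compute the pointwise variance: for each $t \in T$,
\[\bb{E}[X_t^2] = \bb{E}[\bk{g}{t}^2] = t^\top \bb{E}[gg^\top] t = \norm{t}^2 \le 1,\]
where the final inequality is the hypothesis that $T$ is contained in the unit ball. Consequently $\sigma^2 = \sup_{t \in T} \bb{E}[X_t^2] \le 1$.

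Finally I would invoke Theorem \ref{lem:borell} with this process and $\sigma^2 \le 1$ to conclude
\[\Pr{\sup_{t \in T} \bk{g}{t} > \Ev{\sup_{t \in T}\bk{g}{t}} + u} \le e^{-u^2/(2\sigma^2)} \le e^{-u^2/2},\]
which is precisely the claim. There is no real obstacle here; the only subtlety is the separability check, which is handled by continuity of the inner product together with separability of $\bb{R}^d$.
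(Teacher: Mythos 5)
Your proof is correct and follows essentially the same route as the paper, which likewise obtains the lemma by specializing Borell's inequality (Theorem \ref{lem:borell}) to the linear process $t \mapsto \bk{g}{t}$ with pointwise variance $\norm{t}^2 \le 1$. Your additional verification of separability and almost sure boundedness simply fills in details the paper states without elaboration.
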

The final tools that we frequently use throughout the paper are the following maximal inequalities for Gaussian and sub-Gaussian random variables. 
\begin{lemma}\label{lem:subg-max}
  Suppose that we are given random variables $(X_i)_{i \in S}$, with $\#S = n$, each of which satisfies
  \[\bb{P}(X_i > u) \le e^{\frac{-u^2}{2}}\]
  for all $u > 0$. Then we have 
  \begin{equation} \label{eq:subg-max} \bb{E}\left[\max_{i \in [n]} X_i\right] \le \sqrt{2\log n} + \sqrt{\pi/2}.
  \end{equation}
  In the special case where the $X_i$ are independent Gaussian random variables, we have 
  \begin{equation}\label{eq:gauss-max} \bb{E}\left[\max_{i \in [n]} X_i\right] \le \sqrt{2\log n}
  \end{equation}
\end{lemma}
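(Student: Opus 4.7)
The plan is to prove the two inequalities via distinct but standard techniques. For the general sub-Gaussian bound \eqref{eq:subg-max}, I would use the layer-cake representation together with a union bound. Since $\max_i X_i \le (\max_i X_i)_+$, it suffices to bound $\int_0^\infty \bb{P}(\max_i X_i > u)\,du$. A union bound gives $\bb{P}(\max_i X_i > u) \le n e^{-u^2/2}$, but this is only useful once the right-hand side drops below $1$, so I would split the integral at the natural threshold $c = \sqrt{2\log n}$. The head contributes at most $c = \sqrt{2\log n}$ using the trivial probability bound.

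For the tail, the key computation is to show $n\int_{\sqrt{2\log n}}^\infty e^{-u^2/2}\,du \le \sqrt{\pi/2}$. I would substitute $v = u - \sqrt{2\log n}$ so that the integrand becomes $e^{-\log n - v\sqrt{2\log n} - v^2/2}$, factor out $e^{-\log n} = 1/n$, and drop the (negative) cross term $-v\sqrt{2\log n}$ in the exponent. What remains is the half-Gaussian integral $\int_0^\infty e^{-v^2/2}\,dv = \sqrt{\pi/2}$, and the factor of $n$ cancels exactly. Summing the two pieces gives the announced bound.

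For the Gaussian case \eqref{eq:gauss-max}, I would employ the classical Chernoff / MGF argument. By Jensen's inequality applied to $x \mapsto e^{\lambda x}$ with $\lambda > 0$, one has $e^{\lambda \bb{E}[\max_i X_i]} \le \bb{E}[\max_i e^{\lambda X_i}] \le \sum_i \bb{E}[e^{\lambda X_i}] = n e^{\lambda^2/2}$, using the standard Gaussian MGF. Taking logs, dividing by $\lambda$, and optimizing over $\lambda$ (the minimizer is $\lambda = \sqrt{2\log n}$) yields $\bb{E}[\max_i X_i] \le \sqrt{2\log n}$ with no additive constant. The reason this sharper argument cannot be ported to the first inequality is that only a one-sided sub-Gaussian tail is assumed there, which does not control the MGF without further structural information. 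There is no genuine obstacle in either step—both estimates are classical—and the only subtlety is choosing the truncation point $\sqrt{2\log n}$ in the layer-cake calculation, which is dictated by balancing the two contributions.
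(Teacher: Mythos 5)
Your proof is correct and, for \eqref{eq:subg-max}, follows essentially the same route as the paper: truncate at zero, apply the layer-cake formula with the union bound $\bb{P}(\max_i X_i > u) \le n e^{-u^2/2} \wedge 1$, and split the integral at $\sqrt{2\log n}$. The only real difference is the tail estimate: you substitute $v = u - \sqrt{2\log n}$ and drop the nonpositive cross term, which yields $\sqrt{\pi/2}$ uniformly in $n \ge 1$, whereas the paper bounds $\int_A^\infty e^{-u^2/2}\,du \le \frac{1}{A}\int_A^\infty u e^{-u^2/2}\,du = \frac{1}{A}e^{-A^2/2}$, getting $1/\sqrt{2\log n}$ and treating $n=1$ as a separate case; your variant is marginally cleaner. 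For \eqref{eq:gauss-max} the paper does not give an argument at all but cites Chatterjee's book, so your Chernoff/Jensen derivation is a legitimate self-contained substitute; it is valid provided the Gaussians are standard (or at least have mean at most zero and variance at most one, so that $\bb{E}[e^{\lambda X_i}] \le e^{\lambda^2/2}$), which is the intended reading of the lemma, and as a bonus it does not actually use independence. Your remark on why the MGF route does not transfer to the one-sided tail hypothesis without losing the constant is also accurate.
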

\begin{proof}
  For \eqref{eq:gauss-max}, we refer the reader to \citet[Equation A.3]{chatterjee2014superconcentration} and its accompanying proof. For \eqref{eq:subg-max} we closely mimic the proof of \citet[Lemma 2.2.3]{Talagrand2014}.
  By a union bound and the fact that a probability can be at most $1$, for any $u > 0$ we have 
  \[\bb{P}\left(\max_{i \in S} X_i\mathbbm{1}\{X_i > 0\} > u\right) = \bb{P}\left(\max_{i \in S} X_i > u\right) \le ne^{\frac{-u^2}{2}} \wedge 1\]
  Then we can integrate the tail to obtain 
  \begin{align*}
    \bb{E}\left[\max_{i \in [n]} X_i\right] 
    &\le \bb{E}\left[\max_{i \in [n]} X_i\mathbbm{1}\{X_i > 0\} \right] \\
    &= \int_{0}^\infty \bb{P}\left(\max_{i \in S} X_i\mathbbm{1}\{X_i > 0\} > u\right)\,du \\
    &\le \int_{0}^\infty (ne^{\frac{-u^2}{2}} \wedge 1)\,du \\
    &\le A + \int_{A}^\infty ne^{-u^2/2}\,du \\
    &\le A + \frac{n}{A} \int_{A}^\infty ue^{-u^2/2}\,du
    \intertext{Making the substitution $v = u^2/2$, we obtain} 
    &= A + \frac{n}{A} \int_{A^2/2}^\infty e^{-v}\,du  \\
    &= A + \frac{n}{A} e^{-A^2/2}.
    \intertext{Choosing $A = \sqrt{2\log(n)}$ gives}
    &=\sqrt{2\log n} + \sqrt{1/(2\log n)}
  \end{align*}
  Since $1/(2\log n) \le \pi/2$ for $n \ge 2$, we deduce the result for all $n \ge 2$. Meanwhile, for the case $n=1$ we have that
  \[\int_0^\infty e^{-u^2/2} \le \sqrt{\pi/2}\] since the Gaussian density is symmetric and integrates to $1$. 
\end{proof}
\subsection{Other Useful Results}
\begin{lemma}\label{lem:quadratic-margin}
  Given $K$, a closed, convex subset of a Euclidean space $H$, and a point $y \in H$, let $\hat z$ denote the nearest point to $y$ in $H$. Then, for any $z \in K$,
  \[\norm{y-z}^2-\norm{y-\hat z}^2 \ge \norm{\hat z - z}^2.\]
\end{lemma}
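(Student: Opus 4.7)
The plan is to use the standard first-order optimality characterization of the projection onto a closed convex set, then perform a direct expansion of $\|y-z\|^2$.

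First, I would invoke the well-known variational characterization of $\hat z$: since $\hat z$ minimizes $x \mapsto \|y-x\|^2$ over the closed convex set $K$, and the gradient at $\hat z$ is $-2(y-\hat z)$, the first-order condition reads
\[\bk{y-\hat z}{z-\hat z} \le 0 \qquad \text{for all } z \in K.\]
This is the ``obtuse angle'' property of Euclidean projection; it is a standard consequence of the convexity of $K$ together with the fact that for $\lambda \in (0,1)$ the point $(1-\lambda)\hat z + \lambda z$ lies in $K$ and has distance from $y$ no smaller than $\hat z$.

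Next, I would expand $\|y-z\|^2$ by inserting $\pm \hat z$:
\[\norm{y-z}^2 = \norm{(y-\hat z) + (\hat z - z)}^2 = \norm{y-\hat z}^2 + 2\bk{y-\hat z}{\hat z - z} + \norm{\hat z - z}^2.\]
The cross term equals $-2\bk{y-\hat z}{z-\hat z}$, which is nonnegative by the variational inequality above. Rearranging gives exactly
\[\norm{y-z}^2 - \norm{y-\hat z}^2 \ge \norm{\hat z - z}^2,\]
which is the claimed inequality.

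There is no serious obstacle; the only subtlety is to cleanly justify the obtuse angle characterization, which could alternatively be derived by noting that $\varphi(\lambda) = \|y - (\hat z + \lambda(z-\hat z))\|^2$ is a convex function of $\lambda \in [0,1]$ minimized at $\lambda = 0$, hence $\varphi'(0) = -2\bk{y-\hat z}{z-\hat z} \ge 0$. Everything else is a one-line identity from the polarization of the norm.
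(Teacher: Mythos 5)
Your proof is correct, and it takes a cleaner route than the paper's. You invoke the standard variational (``obtuse angle'') characterization of the Euclidean projection, $\bk{y-\hat z}{z-\hat z}\le 0$ for all $z\in K$, justified via the convex function $\varphi(\lambda)=\norm{y-(\hat z+\lambda(z-\hat z))}^2$ having nonnegative derivative at $\lambda=0$, and then the polarization identity $\norm{y-z}^2=\norm{y-\hat z}^2+2\bk{y-\hat z}{\hat z-z}+\norm{\hat z-z}^2$ immediately gives the claim. The paper instead translates so that $y=0$, splits into the cases $0\in K$ and $0\notin K$, and in the latter case argues geometrically that the hyperplane tangent at $\hat z$ to the smallest ball around $0$ containing $\hat z$ separates $0$ from $K$, concluding $\bk{\hat z}{z}>0$ before an algebraic rearrangement. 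Your argument buys two things: it avoids the case split and the separating-hyperplane digression, and it actually uses the full strength of the optimality condition, namely $\bk{\hat z}{z}\ge\norm{\hat z}^2$ in the translated coordinates, which is what the final inequality genuinely requires; the paper's displayed computation only records the weaker fact $\bk{\hat z}{z}>0$ (and contains a sign slip in the expansion of $\norm{z}^2-\norm{\hat z}^2$), so your version is the tighter and more transparent one. The paper's geometric phrasing, for its part, makes visually explicit why the first-order condition holds, but it proves nothing your $\varphi'(0)\ge 0$ argument does not.
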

\begin{proof}
  Note that by strict convexity of the squared norm, $\hat x$ must be unique. By translation, we may assume $y = 0$. There are two cases. If $0 \in K$ then $0 = \hat z$ and the statement is trivial. On the other hand, suppose $0 \not\in K$. Let $C$ be a minimal ball centered at $0$ which contains $\hat z$. Then, the tangent plane to $C$ at $\hat z$, which is perpendicular to $\hat z$ must separate $0$ from $K$ (or else, using convexity of $K$ and strict convexity of the squared norm, we can find a better point than $\hat z$ by interpolation). It follows that $\bk{\hat z}{z} > 0$. Finally, we may compute 
  \begin{align*} \norm{z}^2-\norm{\hat z}^2 
    &= \bk{z}{z}  + \bk{\hat z}{\hat z} \\
    &\ge \bk{z}{z} + \bk{\hat z}{\hat z} - 2\bk{\hat z}{z}
    &= \norm{z-\hat z}^2.
  \end{align*} 
  This completes the proof.
\end{proof}
\end{document}